\newcommand{\cR}{\mathbb{R}}
\newcommand{\cX}{\mathcal{X}}
\newcommand{\cD}{\mathcal{D}}
\newcommand{\cT}{\mathcal{T}}
\newcommand{\cC}{\mathcal{C}}
\newcommand{\bbR}{\mathbb{R}}
\newcommand{\IGNORE}[1]{}
\newcommand{\ind}[1]{\mathbbm{1}\{#1\}}
\renewcommand{\set}[1]{\{ #1 \}}
\newcommand{\len}{\mathtt{len}}
\newcommand{\algmq}{\mathtt{MidpointQuery}}
\newcommand{\poly}{\mathtt{poly}}
\newtheorem*{theorem*}{Theorem}
\newcommand{\tildeO}{\widetilde{O}}
\renewcommand{\ip}[2]{\langle #1, #2\rangle}
\title[Corruption-Robust Lipschitz Contextual Search]{Corruption-Robust Lipschitz Contextual Search}
\begin{document}

\maketitle

\begin{abstract}%
  I study the problem of learning a Lipschitz function with corrupted binary signals. The learner tries to learn a $L$-Lipschitz function $f: [0,1]^d \rightarrow [0, L]$ that the adversary chooses. There is a total of $T$ rounds. In each round $t$, the adversary selects a context vector $x_t$ in the input space, and the learner makes a guess to the true function value $f(x_t)$ and receives a binary signal indicating whether the guess is high or low. In a total of $C$ rounds, the signal may be corrupted, though the value of $C$ is \emph{unknown} to the learner. The learner's goal is to incur a small cumulative loss. This work introduces the new algorithmic technique \emph{agnostic checking} as well as new analysis techniques. I design algorithms which: for the absolute loss, the learner achieves regret $L\cdot O(C\log T)$ when $d = 1$ and $L\cdot O_d(C\log T + T^{(d-1)/d})$ when $d > 1$; for the pricing loss, the learner achieves regret $L\cdot \widetilde{O} (T^{d/(d+1)} + C\cdot T^{1/(d+1)})$. 

\szdelete{for the pricing loss with a learner's choice of parameter $u \in (0, 1)$, the learner achieves $\widetilde{O}(T^{\frac{(1-u)d + u}{(1-u)d + 1} } + T^u C^{1-u} + C^{1/u} )$ loss. }
\end{abstract}

\begin{keywords}%
  Online Learning, Dynamic Pricing, Corruption-Robust Algorithms
\end{keywords}

\section{Introduction}
Consider the dynamic pricing problem, where a seller attempts to sell a product to a buyer each day without any prior knowledge of the maximum price $u$ the buyer is willing to pay (i.e. the buyer's private valuation for the product). The seller must learn this price by observing the buyer's behavior, which takes the form of a binary signal: a purchased or an unpurchased item. An unpurchased item indicates an overpriced product, leading to a possible loss of the entire revenue $u$; a purchased item indicates an underpriced product, leading to a possible loss of surplus (the difference between $u$ and the posted price). However, the seller does not directly observe the revenue loss and only observes the binary signal of the buyer. The seller adjusts the posted price according to the signal and gradually converges to the optimal price. An early important work by~\cite{kleinberg2003value} studied dynamic pricing from a regret-minimization perspective and fully characterize the optimal regret of the seller for the most basic form of this problem.

The \emph{contextual search} (or \emph{contextual pricing}) problem~(\cite{liu2021optimal, leme2022contextual, mao2018contextual}) is motivated by the fact that buyers have different valuations for differentiated products with different features. Henceforth the features of products are referred to as \emph{context} vectors. A common assumption is that the buyer's valuation is a linear function on the context vector~(\cite{liu2021optimal, leme2022contextual}). In this setting,~\cite{liu2021optimal} obtains a nearly optimal regret bound. Another assumption is that the valuation function is Lipschitz with respect to the context vector. This setting was studied by~\cite{mao2018contextual}, in which they gave nearly optimal algorithms based on discretization of the input space.
\lbcomment{Another assumption that ... input space. -> ~\cite{mao2018contextual} studied the assumption of Lipschitzness: the valuation function only requires Lipschitzness in the context vectors without assuming an explicit parametric form and gave nearly optimal algorithms based on discretization of the input space}

\szcomment{Alternate version. The contextual search problem comes from the idea that various product features influence a buyer's valuation. These product features are referred to as context vectors. A common assumption is that the buyer's valuation is a linear function on the context vectors. Liu et al.~\cite{liu2021optimal} have almost completely characterized the optimal regret in this setting. Lipschitzness is another approach: the valuation function only requires Lipschitzness in the context vectors without assuming an explicit parametric form~\cite{mao2018contextual}. }

In practice, it is unreasonable to assume the signals the seller receives are perfect. For example, buyers can act irrationally, or in extreme cases may even exhibit malicious behaviors, resulting in faulty signals. \lbcomment{could give some specific examples here} This motivates the study of designing contextual search algorithms robust to corruptions in the binary signals that the learner observes. \szdelete{The corruptions are modeled as generated by an adversary who has the power to corrupt the binary signals observed by the learner.}\lbcomment{This sounds weird grammatically} Previous works~(\cite{krishnamurthy2022contextual, leme2022corruption}) studied \emph{linear} contextual search with corruptions, in which an adversary has the power to corrupt the binary signals. This work studies \emph{Lipschitz} contextual search with corruptions and proposes corruption-robust algorithms. 

\szcomment{This work considers an adversary model where the adversary has the power to directly corrupt the binary signal after observing the learner's guess. This adversary model is much stronger than the recent works~\cite{krishnamurthy2022contextual, leme2022corruption} on corruption-robust linear contextual search. }\szcomment{studied linear contextual search in an \emph{adversarial corruption} model. In their model, the adversary commits to a corruption $z_t$ to be added to the function value \emph{before} observing the guess by the learner. In the current work, \lbcomment{In the current work -> In this work} the adversary is able to directly decide whether to corrupt the signal, hence the power of the adversary in this work is significantly stronger. }

\paragraph{Loss Functions and Applications. } The literature on contextual search usually considers two loss functions, the \emph{pricing loss} and the \emph{absolute loss} (e.g \cite{mao2018contextual}, \cite{krishnamurthy2022contextual}). The pricing loss is defined by
\[
\ell(q, f(x)) = f(x) - q\cdot \ind{q\le f(x)}. 
\]
The pricing loss captures the dynamic pricing setting described above. The optimal price for a product with context vector $x$ is $f(x)$. If the seller overprices $q > f(x)$, she loses the entire revenue $f(x)$; if the seller underprices $q \le f(x)$, she loses the potential surplus $f(x) - q$. The cumulative pricing loss then measures the regret of the seller had she known the valuation function $f$. It should be noted the actual loss of each round is never revealed to the learner, and only the binary signal is revealed. 

The absolute loss is defined as
\[
\ell(q, f(x)) = \abs{q - f(x)}. 
\]
An application of absolute loss is personalized medicine. Consider a healthcare provider experimenting with a new drug. Assume the optimal dosage for a patient with context feature $x$ is $f(x)$, and the given dosage is $q$. The absolute loss then measures the distance between the injected dosage and the optimal dosage. The learner (healthcare provider) is not able to directly observe this loss but can observe a binary signal informing whether she overdosed or underdosed. 

For either the pricing loss or absolute loss $\ell$, the cumulative loss (or regret, this work will use the two terms interchangebly) of the learner after $T$ rounds is then defined as:
\[
\sum_{t\in [T]} \ell_t = \sum_{t\in [T]} \ell(q_t, f(x_t)). 
\]

\subsection{Contributions}
\renewcommand{\arraystretch}{1.1}
\begin{table}[h]
\centering
\begin{tabular}{|c|c|c|c|}
\hline
                         Setting  & Adversary & Absolute Loss                                                                                             & Pricing Loss                                                                                  \\ \hline
\multirow{2}{*}{Linear}    & No        & \begin{tabular}[c]{@{}c@{}}$O(d\log d)$\\ \cite{liu2021optimal}\end{tabular}                                & \begin{tabular}[c]{@{}c@{}}$O(d\log\log T + d\log d)$\\ \cite{liu2021optimal}\end{tabular}     \\ \cline{2-4} 
                           & Weak      & \begin{tabular}[c]{@{}c@{}}$O(C+d\log T)$\\ \cite{leme2022corruption}\end{tabular}                              & \begin{tabular}[c]{@{}c@{}}$O(C d^3 \poly\log(T))$\\ \cite{krishnamurthy2022contextual}\end{tabular} \\ \hline
\multirow{2}{*}{Lipschitz} & No        & \begin{tabular}[c]{@{}c@{}}$O(T^{(d-1)/d})$\\ $O(\log T)$ when $d = 1$\\ \cite{mao2018contextual}\end{tabular} & \begin{tabular}[c]{@{}c@{}}$O(T^{d/(d+1)})$\\ \cite{mao2018contextual}\end{tabular}               \\ \cline{2-4} 
                           & Strong    & \begin{tabular}[c]{@{}c@{}}$O(C\log T + T^{(d-1)/d})$\\ $O(C\log T)$ when $d = 1$\\ {(This work)}\end{tabular} & \begin{tabular}[c]{@{}c@{}}$\tildeO(C\cdot T^{1/(d+1)} + T^{d/(d+1)})$\\ {(This work)}\end{tabular}                          \\ \hline
\end{tabular}
\vspace{0.1in}
\caption{Regret for contextual search under different settings. For the Lipschitz contextual search problem, the Lipschitz $L$ is treated as constant and omitted in the regret. See Remark~\ref{remark:adversaryModel} for the definition of weak / strong adversary. }
\label{table:results}
\end{table}
In this work, I design corruption-robust algorithms for learning Lipschitz functions with binary signals. The learner tries to learn a Lipschitz function $f$ chosen by an adversary. The domain of $f$ is the $d$-dimensional cube $[0,1]^d$. At each round, the adversary presents the learner with an adversarially chosen context vector $x_t \in [0,1]^d$, and the learner submits a guess $q_t$ to the value of $f(x_t)$ and observes a binary signal indicating whether the guess is low or high. Throughout the course of $T$ rounds, a total of $C$ signals may be corrupted, though the value of $C$ is unknown to the learner. The goal of the learner will be to incur a small cumulative loss that degrades gracefully as $C$ increases. 

I design corruption-robust algorithms under two loss functions, absolute loss and pricing loss. The corruption-robust algorithm for absolute loss is given in~\Cref{sec:symmetric}, and the algorithm for pricing loss is given in~\Cref{sec:pricing}. The main results and comparison with the most relevant works are summarized in \Cref{table:results}. I also give lower bounds showing the upper bounds in this work are tight. 





\lbcomment{consider making the following paragraphs as 1.3, a new subsection. it makes sense to me to introduce related works first and then compare your work to related works}
This work introduces a new algorithmic technique of \emph{agnostic checking} and shows how \emph{agnostic checking} can be incorporated into the adaptive discretization and uniform discretization procedure, usually used in learning Lipschitz functions. At a very high level, the input space is discretized into ``bins", and the learner maintains an associated range for each bin which serves as an estimate for the image of $f$ in the bin. The associated range becomes more accurate as the learner submits more queries and observes more binary signals. However, since the binary signals may be corrupted, the associated range may not be accurate. The learner performs checking steps by querying the boundaries of the associated range and tries to ensure the associated range is accurate. \lbcomment{delete previous sentence}A key challenge is that checking steps may also be corrupted, hence the new technique is termed ``agnostic checking". New analysis techniques are introduced to bound the regret while the learner is agnostic as to whether the checking steps are corrupted or not. 

\szdelete{
\emph{agnostic checks} ensure the estimated function values in the discretized input space are accurate in the presence of adversarial corruptions. The analysis is much more intricate and requires significantly new ingredients, since \emph{agnostic checks} can also be corrupted (hence the name). }\lbcomment{the first and the last sentences seem to convey the same meaning, but it's not clear what's your message to the readers}

\lbcomment{do you ever mention Table 1 in these paragraphs? It seems like Table 1 is related to comparison of previous works. If Table 1 is used in later paragraph, you should consider moving it to a better place}

\subsection{Related Work}
The two most related threads to this work are contextual search and corruption-robust learning. The linear contextual search problem was studied in~\cite{liu2021optimal, leme2022contextual}. The recent work by \cite{liu2021optimal} achieved state-of-the-art regret bounds. For the Lipschitz contextual search problem, \cite{mao2018contextual} proposed algorithms based on adaptive discretization and binary search. They also showed their algorithm to be nearly optimal. The contextual search problem is also closely related to the dynamic pricing problem, for an overview see~\cite{den2015dynamic}. 

Recently, designing learning algorithms robust to corruption and data-poisoning attacks has received much attention. \cite{leme2022corruption} gave corruption-robust algorithms for the linear contextual problem based on density updates, improving over the work by~\cite{krishnamurthy2022contextual}. The study of adversarial attack and the design of corruption-robust algorithms has also appeared in bandit learning~(\cite{lykouris2018stochastic, gupta2019better, jun2018adversarial, zuo2023near, garcelon2020adversarial, bogunovic2021stochastic}) and reinforcement learning~(\cite{lykouris2021corruption, zhang2022corruption, chen2021improved}), to name a few. \szcomment{TODO, add cite. }

This work is also related to the contextual bandits and bandits in metric spaces~(\cite{kleinberg2008multi, kleinberg2010sharp, cesa2017algorithmic, slivkins2011contextual}). The notable work by \cite{slivkins2011contextual} studied a setting where the learner picks an arm after observing the contextual information each round, and subsequently the context-arm dependent reward (or equivalently loss) is revealed. \szcomment{In this work \lbcomment{In the current work -> In this work}, the loss is not revealed to the learner, and only a binary signal is observed, hence the feedback structure is fundamentally different. \lbcomment{You may want to add a couple of sentences to explain why the difference matter, and perhaps why it matters should go to the end of this paragraph or go to the contribution section to emphasize your novelty; review other people's work first and explain what is unique in your work} }\szcomment{TODO. add cite}

Another interesting direction peripheral to this work is learning with strategic agents. In this setting, the learner is not interacting with an adversary but with a strategic agent whose goal is to maximize his long-term utility.~\cite{amin2013learning} studied designing auctions with strategic buyers; ~\cite{amin2014repeated} and~\cite{drutsa2020optimal} incorporated contextual information and designed contextual auctions with strategic buyers. 

\lbcomment{do you need a paragraph describing what's the novelty of your work? I remember some reviewers are questioning your novelty related to previous work.}

\subsection{Comparison with previous work}
The problem of Lipschitz contextual search was studied in~\cite{mao2018contextual}, in which they proposed near-optimal algorithms when the binary signals are perfect each round (i.e. no corruptions). This work studies the Lipschitz contextual search with adversarial corruptions and designs corruption-robust algorithms while being agnostic to the corruption level $C$. 

\cite{krishnamurthy2022contextual} and~\cite{leme2022corruption} studied the linear contextual search problem with adversarial corruptions. In their work, the underlying unknown function $f$ takes the parametric linear form $f(x) = \ip{\theta}{x}$, where $\theta$ is the unknown parameter and $x$ is the context vector. \cite{krishnamurthy2022contextual} proposed maintaining a knowledge set that contains the true parameter; \cite{leme2022corruption} proposed maintaining a density over the parameter space. This work does not make any parametric assumptions and works with the non-parametric Lipschitz assumption, i.e. $f$ is only assumed to be Lipschitz. Moreover, the adversary model in this work is stronger than the adversary model studied in their work (see Remark~\ref{remark:adversaryModel} below). 

\cite{leme2021learning} studied an algorithmic solution to the non-stationary dynamic pricing problem, which is superficially similar to this work. Their work also used the notion of ``checking steps". This work is different from theirs in the following aspects.\lbcomment{previous sentence can be deleted} The algorithm in this work is designed for an \emph{adversarial corruption} setting with \emph{context} information (under the Lipschitz assumption). Since checking steps can also be corrupted and the learner will not know whether the signal obtained from checking steps is accurate, this work terms these as \emph{agnostic checking}. This work shows entirely new analysis techniques on how agnostic checking can be incorporated with the discretization technique used for learning Lipschitz functions. By contrast,~\cite{leme2021learning} studied a non-stationary dynamic pricing setting without context and when there is no corruption in the binary signals. \lbcomment{``where there is'' is not gramatically correct}

Some recent work also studied corruption-robust algorithms for contextual bandit problems (e.g.~\cite{bogunovic2021stochastic} and~\cite{he2022nearly}). Note that the feedback and reward model in this work is fundamentally different from contextual bandit problems. Specifically, for contextual bandits, the adversary first chooses a function (the reward function), and then the learner chooses some action (based on some context) each round and observes a (possibly corrupted) reward. In this work, the contexts are chosen by the adversary, and the learner guesses the value of the unknown function and only observes a (possibly corrupted) binary signal of whether the guess is low or high.

\section{Problem Formulation}
\label{sec:formulation}
\begin{definition}
Let $f: \cR^d \rightarrow \cR$. If $f$ satisfies:
\[
\abs{f(x) - f(y)} \le L\norm{x - y}_\infty
\]
for any $x, y$, then $f$ is $L$-Lipschitz. 
\end{definition}

An adversary selects an $L$-Lipschitz function $f:[0,1]^d \rightarrow [0,L]$. The function $f$ is initially unknown to the learner. At each round $t$, the interaction protocol proceeds as follows:
\begin{enumerate}
    \item The adversary selects a \emph{context} $x_t$ in the input space $\cX = [0,1]^d$.
    \item The learner observes $x_t$ and submits a guess $q_t$ to the value $f(x_t)$.
    \item The adversary observes $q_t$ and computes $\sigma(q_t - f(x_t))$. Here $\sigma(x)$ is the step function that takes value 1 if $x > 0$ and takes value 0 if $x \le 0$. 
    \item The adversary decides whether to corrupt the signal and sends the (possibly corrupted) signal $\sigma_t$ to the learner. 
    \item The learner observes the signal $\sigma_t$, and suffers an unobservable loss $\ell_t := \ell(q_t, f(x_t))$. 
\end{enumerate}

\szdelete{ the adversary selects a \textit{context} $x_t$ in the input space $\cX$, and the learner submits a guess $q_t$ to the value $f(x_t)$. }

\szdelete{After the learner submits a guess, she observes a binary signal $\sigma_t \in \set{0,1}$. The binary signal $\sigma_t$ at round $t$ may be uncorrupted or corrupted. If the signal is uncorrupted, then
\[
\sigma_t := \sigma(q_t - f(x_t)). 
\]
If the signal is corrupted, then
\[
\sigma_t := 1 - \sigma(q_t - f(x_t))
\]
Here, $\sigma(u)$ is the step function that takes the value 1 if $u > 0$ and 0 if $u \le 0$. Hence an uncorrupted signal of $\sigma_t = 1$ indicates a guess too high, and a signal of $\sigma_t = 0$ indicates a guess too low. 
}

In step 4, it is assumed that throughout the course of $T$ rounds, at most $C$ rounds are corrupted. In uncorrupted rounds, the signal $\sigma_t = \sigma(q_t - f(x_t))$. In corrupted rounds, the signal $\sigma_t = 1 - \sigma(q_t - f(x_t))$. In other words, for uncorrupted rounds, a  signal of $\sigma_t = 0$ indicates a guess too low and a signal of $\sigma_t = 1$ indicates a guess too high, and the adversary flips this signal in corrupted rounds. Critically, the value $C$ is \emph{unknown} to the learner. The algorithms presented in this work are agnostic to the quantity $C$, and the performance degrades gracefully as $C$ increases. \szcomment{TODO. add more detail. Done. }

In step 5, the learner suffers a loss $\ell_t = \ell(q_t, f(x_t))$. Note that the functional form of the loss function is known to the learner but the loss value is never revealed to the learner. The goal of the learner will be to incur as little cumulative loss as possible, where the cumulative loss is defined as
\[ 
\sum_{t=1}^T \ell_t = \sum_{t=1}^T \ell(q_t, f(x_t)). 
\]

Two loss functions are considered in this work, the absolute loss
\[
\ell(q, f(x)) = \abs{f(x) - q}
\]
and the pricing loss
\[
\ell(q, f(x)) = f(x) - q\cdot \ind{q \le f(x)}. 
\]
Separate algorithms are presented for each loss function in the following sections. 

\szcomment{make a comment that linear/lipschitz already different}
\begin{remark}
\label{remark:adversaryModel}
    The adversary model in this work is much stronger compared with previous work on corrupion-robust linear contextual search (\cite{krishnamurthy2022contextual, leme2022corruption}) (apart from the obvious differences in the assumptions on $f$, i.e., this work assumes $f$ to be Lipschitz whereas previous works assumed $f$ to be linear). In previous works, the adversary has to commit to a `corruption level' $z_t$ \emph{before} seeing the guess $q_t$ submitted by the learner. In uncorrupted rounds $z_t = 0$, and in corrupted rounds $z_t$ will be a bounded quantity (e.g. $z_t\in [-1,1]$). Then the learner observes the signal $\sigma_t = \sigma(q_t - (f(x_t) + z_t))$. In other words, the binary signal is generated according to the corrupted function value $f(x_t) + z_t$ instead of the true function value $f(x_t)$. 

    In this work, the adversary's power is significantly increased. The adversary has the power to directly corrupt the binary signal and does so \emph{after} the learner has submitted her guess. However, with this being said, \cite{leme2022corruption} also considered a stronger notion of regret (under the weak adversary), where $C$ is defined as the cumulative corruption level instead of the number of rounds in which corruption occurred. 
\end{remark}

\paragraph{Notations. } For a one-dimensional interval $I$, the notation $\len(I)$ denotes the length of $I$. For a hypercube $I\subset \bbR^d$, the notation $\len(I)$ denotes the length of any side of $I$. In some cases, to highlight the dependence on $T$ and $C$, the notation $O_d(\cdot)$ is used to hide dependence on $d$. 

\szdelete{
\begin{remark}
It should be noted there is a slight difference in the range of $f$ compared with the formulation in~\cite{mao2018contextual}. In~\cite{mao2018contextual}, the range of $f$ is assumed to be $[0,1]$ regardless the value of $L$, where this work assume the range of $f$ to be $[0,L]$. The author believes the assumption in this work is the more natural one, as the range of an $L$-Lipschitz function on $[0,1]^d$ is $L$. Assuming the range to be $[0,L]$ also avoids some of the unnecessary complications (in the author's opinion) that arise from the scaling of $L$ (see~\cite{mao2018contextual} appendix A). If one replaces the range $[0,L]$ to $[0,1]$ for $L\ge 1$, the proposed algorithms in this work lead to sharper regret bounds (see details in following sections). 
\end{remark}
}

\lbcomment{I like these remarks as they compare and contrast with previous works to illustrate your novelty again and allow the readers easily follow}

\section{Algorithm for Absolute Loss}
\label{sec:symmetric}

This section gives a corruption-robust algorithm for the absolute loss, defined by
\[
\ell(q_t, f(x_t)) = \abs{f(x_t) - q_t}. 
\] 
I will first design an algorithm for $d = 1$ in this section (\Cref{algo:1dabsolute}), then extend the algorithm to $d \ge 2$. 

\begin{algorithm2e}
\caption{Learning with corrupted binary signals under absolute loss for $d=1$}
\label{algo:1dabsolute}
Learner maintains a partition of intervals $I_j$ of the input space throughout the learning process\;
For each interval $I_j$ in the partition, learner stores a checking interval $S_j$, and maintains an associated range $Y_j$\;
The partition is initialized as $8$ intervals $I_j$ with length $1/8$ each, and each $I_j$ has associated range $Y_j$ and checking interval $S_j$ set to $[0,L]$\;
\For {$t = 1, 2, ..., T$} {
    Learner receives context $x_t$\;
    Learner finds interval $I_j$ such that $x_t \in I_j$\;
    Let $Y_j$ be the associated range of $I_j$\;
    \If {Exists an endpoint of $S_j$ not yet queried} { 
        Learner selects an unqueried endpoint of $S_j$ as guess \;
        \If {Learner guessed $\min(S_j)$ and $\sigma_t = 1$, or guessed $\max(S_j)$ and $\sigma_t = 0$} {
            Mark $I_j$ as {dubious}\;
        }
        \If {Both endpoints of checking interval $S_j$ have been queried} {
            \If {$I_j$ marked {dubious}} {
                Set range $Y_j := [0,L]$\;\label{algoline:marked_rangeset}
            }
            \Else {
                Set range $Y_j = [\min(S_j) - L\cdot \len(I_j)), \max(S_j) + L\cdot \len(I_j))] \cap [0,L]$\;\label{algoline:unmarked_rangeset}
            }
        } 
        
    }
    \Else { 
        $Y_j := \algmq(I_j, Y_j)$\;
        \If { $\len(Y_j) < \max( 4 L\cdot \len(I_j), 4 L / T )$ }  {
            Bisect $I_j$ into $I_{j1}, I_{j2}$, set $S_{j1} = Y_j, S_{j2} = Y_j$\;
        }
    }
}
\end{algorithm2e}

\begin{algorithm2e}
\caption{Midpoint query procedure: $\algmq(I_j, Y_j)$}
\label{algo:midpt_query}
    Input: Interval $I_j$, associated range $Y_j$\;
    Learner queries $q$, the midpoint of $Y_j$\;
    \If {$\sigma_t$ = 1}{
        Set $Y'_j := [0, q_t + L\cdot\len(I_j)] \cap Y_j$\;
        \szcomment{This and above should be equivalent}
    }
    \Else {
        Set $Y'_j := [q_t - L\cdot\len(I_j), 1] \cap Y_j$\;
        \szcomment{This and above should be equivalent}
    }
    Return $Y'_j$\;
\end{algorithm2e}

\subsection{Algorithm for $C=0$}

It will be helpful to first give a brief description of an algorithm that appeared in~\cite{mao2018contextual}. This algorithm works for $d=1$ and when there are no adversarial corruptions. At each point in time, the learner maintains a partition of the input space into intervals. For each interval $I_j$ in the partition, the learner also maintains an associated range $Y_j$, which serves as an estimate of the image of $I_j$.\szcomment{In particular, the algorithm ensures the following is true: $f(I_j)\subset Y_j, \len(Y_j) = L\cdot O(\len(I_j))$. } When a context appears in $I_j$, the learner selects the midpoint of $Y_j$ as the query point, and the associated range $Y_j$ shrinks and gets refined over time. This corresponds roughly to the subprocedure summarized in \Cref{algo:midpt_query}, which is termed the midpoint query procedure $\algmq$. As shown in \cite{mao2018contextual}, the $\algmq$ procedure preserves the relation $f(I_j)\subset Y_j$ (when there are no corruptions). Moreover, whenever $\len(Y_j) \ge 4L\cdot \len(I_j)$, the associated range $Y_j$ shrinks by at least a constant factor after a single call to $\algmq$ (for more details see \Cref{app:proof1d}). When $Y_j$ reaches a point where significant refinement is no longer possible, the learner zooms in on $I_j$ by bisecting it. 

\subsection{Corruption-Robust Search with Agnostic Checks}

The main algorithm for absolute loss with $d = 1$ is summarized in \Cref{algo:1dabsolute}. This algorithm is corruption-robust and agnostic to the corruption level $C$. Below I give the key ideas in this algorithm. 

When there are adversarial corruptions, it will generally be impossible to tell for certain whether the associated range $Y_j$ contains the image of the interval $I_j$. That is, the learner will not know with complete certainty whether $f(I_j) \subset Y_j$ holds. The analysis divides intervals into three types: \textit{safe intervals}, \textit{correcting intervals}, and \textit{corrupted intervals}. 

\begin{definition} [Corrupted, Correcting, and Safe Intervals]
Consider some interval $I_j$ that occurs during the run of the algorithm. 
\begin{itemize}
\item $I_j$ is a \emph{corrupted interval} if, there exists some $t$ where $x_t\in I_j$ and the signal $\sigma_t$ is corrupted. 
\item $I_j$ is a \emph{correcting interval} if its parent interval is a \emph{corrupted interval} and for every round $t$ where $x_t \in I_j$, the signal $\sigma_t$ is uncorrupted. 
\item $I_j$ is a \emph{safe interval} if its parent interval is safe or correcting (or itself is a root interval), and for every round $t$ where $x_t\in I_j$, the signal $\sigma_t$ is uncorrupted. 
\end{itemize}
\end{definition}

I introduce the \textit{agnostic checking} measure to combat adversarial corruptions. Consider a new interval $I_j$ that has been formed by bisecting its parent interval, and that a context $x_t$ appears in this interval $I_j$. The learner will first query the two endpoints of $Y_j$ and test whether $f(I_j)\subset Y_j$ holds (according to the possibly corrupted signals). The interval passes the agnostic check if according to the (possibly corrupted) signals $f(I_j) \subset Y_j$. If the interval does not pass the agnostic check, it is marked as \emph{dubious}, and the learner resets $Y_j$ to $[0, L]$ and effectively searches from scratch for the associated range. 

The main theorem is stated below. 

\begin{restatable}{theorem}{thmsymmoneD}
\label{thm:symm1D}
\Cref{algo:1dabsolute} incurs $L\cdot O(C \cdot \log T)$ total absolute loss for $d = 1$. 
\end{restatable}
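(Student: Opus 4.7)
The plan is to decompose the cumulative loss into contributions from intervals of each of the three types (safe, correcting, corrupted) and bound these contributions separately.

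First I would establish the key invariant that for any safe interval $I_j$, the containment $f(I_j) \subseteq Y_j$ holds at every round where $x_t \in I_j$. This follows by induction over the bisection tree: the invariant is preserved by $\algmq$ under truthful signals (by design of the $L\cdot\len(I_j)$ Lipschitz padding in \Cref{algo:midpt_query}), and a safe child inherits $S_j = Y_j^{\text{parent}}$, which contains $f(I_{\text{parent}}) \supseteq f(I_j)$ by the inductive hypothesis. For a correcting interval, whose inherited $S_j$ need not contain $f(I_j)$ because its parent was corrupted, either one of the two sanity-check endpoint queries detects a violation via the truthful signal and marks the interval dubious (after which $Y_j$ is reset to $[0,L]$ on line \ref{algoline:marked_rangeset}), or both sanity checks pass, in which case for each of the two sanity-check rounds $f(x_t) \in S_j$ holds, and by the $L$-Lipschitz property every $y \in I_j$ satisfies $|f(y) - f(x_t)| \le L\cdot\len(I_j)$, so $f(I_j) \subseteq [\min(S_j) - L\cdot\len(I_j), \max(S_j) + L\cdot\len(I_j)]$, which is exactly the new $Y_j$ set on line \ref{algoline:unmarked_rangeset}. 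In both cases the invariant is re-established and then maintained by subsequent truthful $\algmq$ calls.

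Second, I would count intervals of each type. Each corrupted interval must contain at least one of the $C$ corrupted rounds, so there are at most $C$ corrupted intervals; each has at most two children, hence at most $2C$ correcting intervals. Additionally, every interval survives at most $O(\log T)$ rounds before being bisected: the two sanity-check rounds plus midpoint-query rounds that drive $\len(Y_j)$ below the threshold $\max(4L\cdot\len(I_j), 4L/T)$. Each $\algmq$ call shrinks $\len(Y_j)$ by at least a constant factor while $\len(Y_j) \ge 4L\cdot\len(I_j)$; importantly this contraction holds even under adversarial signals, since $\algmq$ monotonically intersects $Y_j$ with a half-space, so the $O(\log T)$ round cap applies to corrupted intervals as well.

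Third, I would bound the per-interval loss. For a corrupted or correcting interval each round contributes loss at most $L$ (since $f, q \in [0,L]$), so the per-interval loss is $L \cdot O(\log T)$; summing over the $O(C)$ such intervals yields $L \cdot O(C \log T)$. For a safe interval $I_j$, the invariant $f(I_j) \subseteq Y_j$ makes the midpoint-query loss at most $\len(Y_j)/2$ per round, and the geometric contraction of $\len(Y_j)$ sums to $L \cdot O(\len(I_j))$; the two sanity-check queries also contribute $L \cdot O(\len(I_j))$ since $f(x_t) \in S_j$ and $\len(S_j) = O(L\cdot\len(I_j))$ by the parent's termination condition. Summing over the safe bisection tree, the sum of interval lengths at every depth is at most the size of the input space, and the depth is $O(\log T)$, giving a total safe-interval loss of $L \cdot O(\log T)$. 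Combining the three contributions yields cumulative loss $L \cdot O((C+1)\log T) = L \cdot O(C\log T)$ as claimed. The main obstacle I anticipate is the correcting-interval analysis: verifying that truthful signals at only two sanity-check rounds, combined with Lipschitzness, are enough to recover $f(I_j) \subseteq Y_j$ over the entire image; the corrupted-interval bound is more robust because it relies only on the monotone contraction of $\algmq$ rather than on any semantic guarantee about $Y_j$.
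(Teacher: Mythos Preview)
Your proposal is correct and follows essentially the same decomposition and lemmas as the paper's proof: the three-way split into safe, correcting, and corrupted intervals, the invariant $f(I_j)\subseteq Y_j$ established inductively for safe intervals and re-established for correcting intervals via the agnostic check, the $O(\log T)$ lifetime bound from the monotone contraction of $\algmq$, and the depth-wise summation for safe intervals. One small refinement the paper adds is to bound the correcting-interval contribution as $L\cdot O(C)$ rather than $L\cdot O(C\log T)$, by noting that the geometrically shrinking $\len(Y_j)$ sums to $O(L)$ per correcting interval even if it was marked dubious; this does not change the final $L\cdot O(C\log T)$ bound.
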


\begin{proof}[Proof Sketch]
The proof consists of several steps. 
\paragraph{Step 1.} An interval not marked \emph{dubious} bisects in $O(1)$ rounds. This is because the associated range is shrinking by at least a constant after each query and after $O(1)$ rounds the associated range will have shrunk enough and meet the criteria for bisecting the interval. By a similar logic, any interval marked \emph{dubious} bisects in $O(\log T)$ rounds. 
\paragraph{Step 2.} Consider any \emph{correcting} interval. After the agnostic checking steps, the associated range must contain the true range of the function on the interval. The associated range will then be accurate when the interval is bisected. Then consider any \emph{safe} interval. Its parent must be safe or correcting, and by induction, a \emph{safe} interval will not be marked \emph{dubious}. 
\paragraph{Step 3.} There can be at most $O(C)$ \emph{corrupted} intervals. Since any \emph{correcting} interval has a \emph{corrupted} interval as a parent, there can be at most $O(C)$ correcting intervals. A total of $O(\log T)$ queries can occur on corrupted and correcting intervals, contributing loss $L\cdot O(C\log T)$ (actually one can show correcting intervals contribute loss at most $L\cdot O(C)$). For \emph{safe} intervals, a loss of magnitude $O(2^{-h})$ can occur at most $O(2^h)$ times (since there are $O(2^h)$ intervals at depth $h$), thus the total loss from safe intervals can be bounded by $L\cdot O(\log T)$. 

Putting everything together gives the total $L\cdot O(C\cdot \log T)$ regret bound. 
\end{proof}

\begin{remark}
Note that in the algorithm, $I_j$ gets bisected when $\len(Y_j)$ reaches below $\max(4L\cdot \len(I_j), 4L/T)$. The latter quantity $4L/T$ ensures that any interval must be bisected in $O(\log T)$ rounds. This is also enough to ensure that all \emph{safe} intervals with depth $>\log T$ contribute total loss at most $O(1)$. 
\end{remark}

\begin{remark}
The regret is tight up to $\log T$ factors. To see this, consider the first $C$ rounds, where the adversary corrupts the signal with probability $1/2$ each round. The learner essentially receives no information during this period, and each round incurs regret $\Omega(L)$. 
\end{remark}


\begin{remark}
    An interesting aspect of this algorithm is that the learner remains agnostic to the type of each interval. In other words, during the run of the algorithm, the learner will in general not know for certain which type an interval belongs to. The analysis makes use of the three types of the interval, not the algorithm itself. 
\end{remark}

\szcomment{Optimal 1d lipschitz? }
\szcomment{1/2, 1/2, 3/4, 3/4, 15/16, 15/16, 35/32, 35/32, 315/256}

\subsection{Extending to $d > 1$}
\Cref{algo:1dabsolute} can be extended in a straightforward way to accommodate the case $d > 1$. The full algorithm and analysis are given in \Cref{app:sym-highd}. The only difference is that the learner maintains $d$-dimensional hypercubes instead of intervals. The main result is as follows. 
\begin{restatable}{theorem}{thmsymmMD}
\label{thm:symmMD}
    \Cref{algo:highDsymm} incurs $L\cdot O_d(C\log T +  T^{(d-1) / d)})$ total absolute loss for $d\ge 2$. 
\end{restatable}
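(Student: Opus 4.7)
The plan is to extend \Cref{algo:1dabsolute} to $d$ dimensions by maintaining a partition of $[0,1]^d$ into axis-aligned hypercubes rather than intervals. Each hypercube $I_j$ retains a scalar-valued associated range $Y_j \subseteq [0,L]$ and a scalar-valued checking interval $S_j$ inherited from its parent's final $Y_j$ at bisection time. The midpoint query and agnostic check subroutines are untouched, since both act on scalar ranges: on the first two contexts landing in a fresh $I_j$ the learner queries $\min(S_j)$ and $\max(S_j)$ and marks $I_j$ dubious on inconsistency; otherwise she sets $Y_j := [\min(S_j) - L\cdot\len(I_j), \max(S_j) + L\cdot\len(I_j)] \cap [0,L]$ and runs midpoint queries until $\len(Y_j) \le \max(4L\cdot\len(I_j), 4L/T)$, at which point $I_j$ is bisected along all $d$ axes into $2^d$ children of half the side length, each inheriting $Y_j$ as its new $S_j$.

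The three-way taxonomy of hypercubes (safe, correcting, corrupted) carries over verbatim, and Steps 1 and 2 of the 1D proof sketch go through without modification. A non-dubious hypercube bisects after $O(1)$ queries and a dubious one after $O(\log T)$; for safe hypercubes the $\ell_\infty$-Lipschitz assumption gives $\text{diam}(f(I_j)) \le L\cdot\len(I_j)$ (the $\ell_\infty$-diameter of a side-$s$ hypercube is exactly $s$, independent of $d$), so the inherited $S_j$ expanded by $L\cdot\len(I_j)$ on each side still contains $f(I_j)$ and the induction that safe hypercubes are never marked dubious proceeds unchanged.

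The quantitative change relative to 1D arises from counting safe hypercubes by depth. A single corrupted signal contaminates at most one hypercube, and each corrupted hypercube spawns at most $2^d$ correcting children, so corrupted and correcting hypercubes together number $O_d(C)$ and contribute loss at most $L\cdot O_d(C\log T)$ via the bound of $O(\log T)$ queries per dubious hypercube at loss $L$ each. At depth $h$ a safe hypercube has side length $2^{-h}$ and incurs $O(1)$ queries of loss $O(L\cdot 2^{-h})$ each, and the partition can hold at most $2^{hd}$ depth-$h$ hypercubes, so the total safe loss is bounded by
\[
L \sum_{h=0}^{O(\log T)} \min(2^{hd}, T) \cdot 2^{-h}.
\]
Balancing at the critical depth $h^* = (\log_2 T)/d$ where $2^{h^* d} = T$ gives a geometric series dominated by $h = h^*$ and summing to $L\cdot O(2^{h^*(d-1)}) = L\cdot O(T^{(d-1)/d})$.

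The main obstacle is the safe-interval induction across the $2^d$-fold bisection: one must verify that when a safe parent bisects, every one of its $2^d$ children inherits an $S_j$ that actually contains $f$'s image on that child. This reduces cleanly to the Lipschitz observation above, since the parent's final $Y_j$ contains $f(I_{\text{parent}})$ and hence $f(I_j)$ for every child $I_j$, and the inherited $S_j$ is set to exactly this $Y_j$. Combining the two contributions yields the claimed bound $L\cdot O_d(C\log T + T^{(d-1)/d})$.
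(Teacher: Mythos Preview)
Your proposal is correct and follows essentially the same approach as the paper: lift the $1$D partition to hypercubes, carry over the safe/correcting/corrupted taxonomy and the agnostic-check induction unchanged, bound corrupted and correcting hypercubes by $O_d(C)$ with $O(\log T)$ queries each, and sum the safe contribution $\sum_h 2^{h(d-1)}$ up to the critical depth $h^*\approx (\log T)/d$ to get $L\cdot O(T^{(d-1)/d})$. The only cosmetic differences are that the paper truncates the safe sum at $h=(\log T)/d$ rather than using your $\min(2^{hd},T)$ factor, and it bounds correcting hypercubes slightly more sharply at $L\cdot O(C\cdot 2^d)$ rather than $L\cdot O_d(C\log T)$, neither of which affects the final bound.
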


\begin{remark}
In~\cite{mao2018contextual}, it was shown the optimal regret when $C=0$ is $\Omega(T^{(d-1)/(d)})$. Hence, the dependence on $C$ and $T$ are both optimal in the above theorem (up to $\log T$ factors). 
\end{remark}

\section{Algorithm for Pricing Loss}
\label{sec:pricing}

\begin{algorithm2e}[htb]
\caption{Learning with corrupted binary signal under pricing loss}
\label{algo:pricing}
Set parameter $\eta_0 = T^{-1/(d+1)}$, agnostic check schedule $\tau_0$\;
Learner uniformly discretizes input space into hypercubes with lengths $\Theta(\eta_0)$ each\;
For each hypercube $I_j$, learner maintains an associated range $Y_j$ (initialized as $[0, L]$), query count $c_j$ (initialized as $0$)\;
\For {$t = 1, 2, ..., T$} {
    Learner receives context $x_t$\;
    Learner finds hypercube $I_j$ such that $x_t \in I_j$\;
    Let $Y_j$ be the associated range of $I_j$\;
    \If {$\len (Y_j) < 10 L \cdot \eta_0$} { 
        $c_j := c_j + 1$\;
        \If { $c_j > \tau_0$ } {
            Query $\max(Y_j)$\;
            Set $c_j := 0$\;
        }
        \Else { 
            Query $\min(Y_j)$\;
        }
        \If {Learner is surprised (see Definition \ref{def:surprises})}{  
            Set $Y_j := [0,L]$\;
            Set $c_j := 0$\;
        }
    }
    \Else {
        $Y_j := \algmq(I_j, Y_j)$\;
    }
        
}
\end{algorithm2e}

This section discusses the new ideas needed to design corruption-robust algorithm for the pricing loss. The description shall be given in the context of dynamic pricing, and the learner shall be referred to as the seller in this section. The main algorithm is summarized in~\Cref{algo:pricing}. Note that for the pricing loss, the case with $d =1$ and $d > 1$ are treated together. 

Extending~\Cref{algo:1dabsolute} for the absolute loss to pricing loss is not straightforward, since agnostic checks will overprice and the seller necessarily incurs a large loss whenever she overprices. The learner did not have this problem with absolute loss, since the absolute loss is continuous. 

\subsection{Algorithm for $C = 0$}
I first give the description of an algorithm for $C = 0$. The algorithm is adapted from \cite{mao2018contextual} but uses uniform discretization. The input space is discretized into hypercubes with length $\eta_0 = T^{-1/(d+1)}$. When a context appears in some hypercube $I_j$, one of two things can happen: 
\begin{enumerate}
    \item If the associated range is larger than $10L\cdot \eta_0$, the learner performs $\algmq$ and updates the associated range. There can be at most ${O}(\eta_0^d \log T)$ rounds of this type, since for each hypercube the associated range will shrink below $10L\cdot \eta_0$ after $O(\log T)$ queries. 
    \item Otherwise, the associated range is less than $10L\cdot \eta_0$, and the learner can directly set the lower end of the interval as the price. The total loss from these rounds can be bounded as $10L\cdot T\eta_0$. 
\end{enumerate}
The total loss is then $L 
\cdot O(\eta_0^d \log T) + L\cdot O(T\eta_0) = L\cdot {O}(T^{d/(d+1)} \log T)$. 

\szdelete{
The learner searches for the associated range $Y_j$ of a hypercube using the midpoint query procedure $\algmq$ (these shall be termed \emph{searching rounds} or \emph{searching queries}) and stops the search process when the associated range $Y_j$ becomes small enough (specifically, when the length of $Y_j$ drops below $10L\cdot \eta_0$). The hypercube is said to become \emph{pricing-ready} when this happens. When there are no adversarial corruptions, the learner now has a good estimate of the optimal price and sets the lower end of $Y_j$ as the price for any context in this hypercube. These shall be termed \emph{pricing rounds}. At \emph{pricing rounds}, the seller expects the buyer to purchase the item, and assuming the range $Y_j$ is accurate (which is the case when $C = 0$), the revenue loss should be no larger than $10 L \cdot \eta$. The total loss from \emph{pricing rounds} can be bounded by $L\cdot O(T\cdot \eta_0)$. The total loss from searching rounds can be bounded by $L\cdot  \widetilde{O}(\eta_0^{-d})$ since there are $O(\eta_0^{-d})$ hypercubes, and for each hypercube, there can be at most $O(\log T)$ queries before the hypercube becomes \emph{pricing-ready}. By the choice of parameter $\eta_0 = T^{-1/(d+1)}$, the regret bound is $L\cdot \widetilde{O} (T^{d/(d+1)})$. }

\szdelete{
It should be noted that~\cite{mao2018contextual} proposed an algorithm based on adaptive discretization whereas the above algorithm is based on uniform discretization. Using adaptive discretization improves the regret bound by a $O(\log T)$ factor, the presentation for the corruption-robust algorithm for pricing loss is based on uniform discretization, as it highlights the new algorithmic ideas more clearly. Nevertheless, the algorithm can be combined with adaptive discretization by using ideas from the previous section. }

\szcomment{TODO. Compare with adaptive discretization. }\szcomment{DONE}

\subsection{Corruption-Robust Pricing with Scheduled Agnostic Checks}

The seller could potentially run into issues when there are adversarial corruptions. The adversary can manipulate the seller into underpricing by a large margin by only corrupting a small number of signals. Consider the following example. The buyer is willing to pay $0.5$ for an item with context $x$. The adversary can manipulate the signals in the $\algmq$ procedure so that the seller's associated range for $x$ is $[0, \eta]$, where $\eta < 10L\cdot \eta_0$. Hence, the associated range does not contain and is well below the optimal price $0.5$ for this item. The seller then posts a price of $0$ for item $x$. Even though the buyer purchases the item at a price of $0$, the seller is losing 0.5 revenue per round, and this happens with the adversary corrupting only $O(\log T)$ rounds. \szcomment{TODO, rephrase}

\szcomment{ $Y_j$ is not a valid pricing range, and even though she prices the item at the lower end of $Y_j$ and the buyer is purchasing the item, the true valuation could be much higher and the seller is losing a large amount of revenue. }

To combat adversarial corruptions, the learner performs agnostic check queries. These queries differ from agnostic checks for the absolute loss in the following two aspects. First, whereas for the absolute loss, the learner performs agnostic check queries on both ends of the associated range $Y_j$, for the pricing loss the seller only performs agnostic checks on the upper end of $Y_j$. This ensures the seller is not underpricing the product by a large margin. Second, the seller only performs agnostic check queries when the associated range of the hypercube is sufficiently small (below $L\cdot O(\eta_0)$). The seller expects the buyer not the purchase the item during agnostic check queries. 

The learner will set a schedule for performing agnostic checks. At a high level, performing agnostic checks too often incurs large regret from overpricing, while performing agnostic checks too few may not detect corruptions effectively. The schedule for agnostic checks serves as a mean to balance the two. This schedule informs the learner how often to perform agnostic check so as not to incur large regret when overpricing, and at the same time control the loss incurred from corrupted signals. Specifically, the learner keeps track of how many rounds the context vectors arrived in each hypercube. The learner then performs an agnostic check every $\tau_0$ rounds. Here, $\tau_0$ will be a paramter to be chosen later. 

Some notions are introduced. 
\begin{definition} [Pricing-ready hypercubes]
For hypercube $I_j$, if the length of the associated range is below $10L\cdot \eta_0$, then the hypercube is \emph{pricing-ready}. 
\end{definition}

\begin{definition} [Pricing round, checking round, searching round]
If a hypercube is \emph{pricing-ready}, then setting the price as the lower end of the associated range is termed \emph{pricing round}, and setting the price as the upper end of the associated range is termed \emph{checking round}. If a hypercube is not \emph{pricing-ready}, then performing a $\algmq$ is termed \emph{searching round}. 
\end{definition}

\begin{definition}[Surprises]
\label{def:surprises}
    The seller is said to become \emph{surprised} when the signal she receives is inconsistent with her current knowledge. Specifically, the seller becomes surprised when either: she performs a checking round but observes an underprice signal $(\sigma_t = 0)$; or she performs a pricing round but observes an overprice signal $(\sigma_t = 1)$. 
\end{definition}

\begin{definition} [Runs]
Fix some hypercube $I_j$, the set of queries that occurs on $I_j$ before the learner becomes \emph{surprised} (or before the algorithm terminates) called a \emph{run}. 
\end{definition}
When a run ends, the learner resets the associated range and a new run begins. Note there can be multiple runs on the same hypercube.

The algorithm gives the following cumulative loss bound. A detailed analysis can be found in~\Cref{app:pricing}. 
\begin{restatable}{theorem}{thmPricing}
\label{thm:pricing}
Using~\Cref{algo:pricing}, the learner incurs a total pricing loss
\[
L\cdot O(C\log T + T^{d/(d+1)}\log T + C\tau_0 + T/\tau_0)
\]
for any unknown corruption budget $C$. 
Specifically, setting $\tau_0 = T^{1/(d+1)}$, the learner incurs total pricing loss
\[
L\cdot \widetilde{O} (T^{d/(d+1)} + C\cdot T^{1/(d+1)} ). 
\]
\end{restatable}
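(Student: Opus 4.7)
The plan is to bucket the pricing loss into four contributions---searching, checking, overlap-pricing, and non-overlap-pricing---and to reduce each to either a global count (number of hypercubes, number of corruptions) or a scheduling parameter ($\tau_0$, $\eta_0$). The central organizing notion is the \emph{run}, and the linchpin intermediate result is a run-counting lemma: the total number of runs across all hypercubes is at most $\eta_0^{-d} + C$. To see this, a completely clean run cannot produce a surprise, because when all $\algmq$ calls on the run are uncorrupted the invariant $f(I_j)\subset Y_j$ of \cite{mao2018contextual} is preserved, hence $\min(Y_j)\le f(x_t)\le\max(Y_j)$ on every round and neither a pricing nor a checking signal can misfire. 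Consequently every surprise is enabled by at least one corruption occurring on the same run, and since $Y_j$ is reset after each surprise this charging is injective across runs.

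With the lemma in hand I would bound the four buckets separately. (i) Because $\algmq$ contracts $\len(Y_j)$ by a constant factor per call, each run uses at most $O(\log T)$ searching queries, contributing $L\cdot O((\eta_0^{-d}+C)\log T)$ searching loss. (ii) The algorithm issues at most one checking query per $\tau_0$ pricing-ready rounds on any hypercube, so the global checking count is $\le T/\tau_0$ and the loss is $L\cdot O(T/\tau_0)$. (iii) For a pricing round in which $Y_j\cap f(I_j)\neq\emptyset$, picking any $v\in Y_j\cap f(I_j)$ and combining $\len(Y_j)\le 10L\eta_0$ with the Lipschitz variation $\max f(I_j)-\min f(I_j)\le L\eta_0$ across $I_j$ yields $f(x_t)-\min(Y_j)\le 11L\eta_0$; summed over all such rounds this is $L\cdot O(T\eta_0) = L\cdot O(T^{d/(d+1)})$. (iv) A run with $Y_j\cap f(I_j)=\emptyset$ must have been corrupted and lives at most $\tau_0$ further rounds: if $Y_j$ lies below $f(I_j)$ then the Lipschitz-uniform gap $\min f(I_j)-\max(Y_j)>0$ forces the next scheduled check to surprise regardless of which context arrives; if $Y_j$ lies above $f(I_j)$ the first pricing round already surprises. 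Since there are at most $C$ such runs this contributes $L\cdot O(C\tau_0)$.

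Summing the four contributions yields $L\cdot O(C\log T + T^{d/(d+1)}\log T + C\tau_0 + T/\tau_0)$, which is the first statement. Substituting $\tau_0 = T^{1/(d+1)}$ balances $T/\tau_0$ against $C\tau_0$ to give $L\cdot \widetilde O(T^{d/(d+1)} + C\cdot T^{1/(d+1)})$, absorbing the dominated $C\log T$ into the $\widetilde O$. I expect the main obstacle to be step (iv): one must use the Lipschitz uniformity of $f$ over the entire hypercube $I_j$ to guarantee that the next scheduled check in a non-overlap run is certain to surprise, rather than coincidentally falling on a context that evades detection; a naive analysis that only bounds $f(x_t)$ pointwise would let an adversarial context sequence stretch a corrupted run arbitrarily and break the $C\tau_0$ bound.
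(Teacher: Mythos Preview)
Your decomposition into searching, checking, and pricing rounds and the run-counting lemma match the paper's structure. However, the case analysis for the pricing bucket has two related gaps, both stemming from the assumption that once the associated range is wrong the \emph{next} uncorrupted-looking opportunity will expose it.

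In (iv) you claim a non-overlap run ``lives at most $\tau_0$ further rounds'' because the next scheduled check must surprise. This ignores that the adversary can corrupt the checking signal itself: when $Y_j$ lies below $f(I_j)$ the true signal at $\max(Y_j)$ is an underprice, and flipping it to overprice leaves the learner unsurprised. A single non-overlap run can therefore last $\Theta(\xi\tau_0)$ rounds, where $\xi$ is the number of corruptions spent on that run; the $C\tau_0$ bound follows by summing $\xi\tau_0$ over runs and using $\sum\xi\le C$, not by bounding each run by $\tau_0$ and counting $C$ runs. Your ``main obstacle'' paragraph guards against adversarial \emph{contexts} but not against adversarial \emph{signals at the check}, which is the actual mechanism. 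In (iii) the bound $f(x_t)-\min(Y_j)\le 11L\eta_0$ is the loss only when $\min(Y_j)\le f(x_t)$; in the overlap case with $\min(Y_j)\in f(I_j)$ it is possible that $f(x_t)<\min(Y_j)$, and then the pricing loss is the full $f(x_t)$, potentially $\Omega(L)$. The paper's Case~4 handles this by observing that an uncorrupted such round triggers a surprise and ends the run, while a corrupted one is charged to $\xi$. Both fixes come down to tracking the per-run corruption count $\xi$ and summing over runs, which is exactly the content of the paper's Lemma~\ref{lemma:pricingLoss}.
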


\begin{proof}[Proof Sketch]
The proof consists of several steps. 
\paragraph{Step 1.} The seller can only become surprised when at least one corruption occurs on the current `run' of some hypercube. Thus the seller can become surprised at most $C$ times. 
\paragraph{Step 2. }For searching rounds, the total loss can be bounded as $L\cdot O( (C + \eta_0^{-d}) \log T)$. There can be at most $O(T/\tau_0)$ checking rounds, contributing loss $L\cdot O(T /\tau_0)$. 
\paragraph{Step 3. } For pricing rounds, the total loss can be bounded as $L\cdot O(C \tau_0 + T \eta_0)$. At a very high level, if the pricing interval is accurate (meaning the lower endpoint is just below the true price by a margin of $L\cdot O(\eta_0)$), then these rounds contribute loss at most $L\cdot O(T \eta_0)$. Otherwise, the seller can detect if the pricing interval is inaccurate by performing agnostic checks, and the loss from these rounds can be bound by $L\cdot O(C \tau_0)$. 

Putting everything together completes the proof. 
\end{proof}


If the corruption budget $C$ is known, the learner can set $\tau_0$ to balance the terms and achieve a sharper regret bound. 


\begin{corollary}
\label{cor:pricingKnownC}
Assume the corruption budget $C$ is known to the learner. Using \Cref{algo:pricing} with $\tau_0 = \sqrt{T/C}$, the learner incurs total loss
\[
L\cdot \widetilde{O} (T^{d/(d+1)} + \sqrt{TC}). 
\]
\end{corollary}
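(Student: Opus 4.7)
The plan is to apply Theorem \ref{thm:pricing} directly with the specified choice $\tau_0 = \sqrt{T/C}$, which is valid since $C$ is now assumed known. No new algorithmic machinery is required; the corollary is essentially an optimization of the parameter $\tau_0$ in the already-proved bound.

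First I would substitute $\tau_0 = \sqrt{T/C}$ into the two $\tau_0$-dependent terms from the theorem. This gives $C\tau_0 = C\sqrt{T/C} = \sqrt{TC}$ and $T/\tau_0 = T\sqrt{C/T} = \sqrt{TC}$, so the two terms balance. This is the standard square-root balancing that motivates the choice: the function $\tau_0 \mapsto C\tau_0 + T/\tau_0$ is minimized at $\tau_0 = \sqrt{T/C}$, yielding minimum value $2\sqrt{TC}$.

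Next I would verify that the remaining terms $C\log T$ and $T^{d/(d+1)}\log T$ are subsumed into $\widetilde{O}(T^{d/(d+1)} + \sqrt{TC})$. The term $T^{d/(d+1)}\log T$ is $\widetilde{O}(T^{d/(d+1)})$ by definition of $\widetilde{O}$. For $C\log T$, since it is assumed $C$ is sublinear in $T$ (and in particular $C \le T$), we have $C = \sqrt{C}\cdot \sqrt{C} \le \sqrt{C}\cdot \sqrt{T} = \sqrt{TC}$, so $C\log T = \widetilde{O}(\sqrt{TC})$. Combining, the total bound becomes
\[
L\cdot \widetilde{O}\bigl(T^{d/(d+1)} + \sqrt{TC}\bigr),
\]
as claimed. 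The only subtle point — and it is genuinely minor — is the implicit assumption that $\sqrt{T/C} \ge 1$, i.e.\ $C \le T$; if $C > T$ the bound is vacuous anyway since total loss is trivially $O(LT)$. I do not expect any real obstacle: the corollary is a one-line parameter tuning consequence of the main theorem.
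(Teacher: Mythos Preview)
Your proposal is correct and matches the paper's approach: the paper states the corollary immediately after Theorem~\ref{thm:pricing} with only the remark that when $C$ is known one can ``set $\tau_0$ to balance the terms,'' and your substitution $\tau_0=\sqrt{T/C}$ together with the absorption of $C\log T$ and $T^{d/(d+1)}\log T$ into $\widetilde O(T^{d/(d+1)}+\sqrt{TC})$ is exactly that balancing argument spelled out.
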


\subsection{Lower Bounds}

This subsection shows two lower bounds for either known or unknown $C$, showing the upper bounds in the previous subsection are essentially tight. Define an environment as the tuple $(f, C, \mathcal{S})$, representing the function, corruption budget, and corruption strategy of the adversary respectively. In the following assume $L = 1$, the scaling to other $L$ is straightforward. 

\begin{theorem}
Let $A$ be any algorithm to which the corruption budget $C$ is unknown. Suppose $A$ achieves a cumulative pricing loss $R(T)$ when $C = 0$, and that $R(T) = o(T)$. Then, there exists some corruption strategy with $C = 2R(T)$, such that the algorithm suffers $\Omega(T)$ regret. 
\end{theorem}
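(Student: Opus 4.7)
The plan is to exhibit two environments---one clean and one adversarially corrupted---that produce identical signal streams at small corruption cost yet induce drastically different losses. I set $L = 1$ (other values follow by rescaling), take the two constant functions $f_1(x) \equiv 1/2$ and $f_2(x) \equiv 1$ (both trivially $L$-Lipschitz and taking values in $[0,L]$), and let the contexts $x_t$ be arbitrary.

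First I analyze the algorithm $A$ under the reference environment $E_1 = (f_1, 0, \emptyset)$. The pricing loss at round $t$ equals $(1/2 - q_t)\ind{q_t \le 1/2} + (1/2)\ind{q_t > 1/2}$, so every round on which $A$ queries $q_t > 1/2$ contributes at least $1/2$ to the cumulative loss. Letting $N$ denote the (possibly random) number of such rounds when $A$ runs in $E_1$, the hypothesis that $A$ achieves cumulative loss $R(T)$ yields $\Ex[N] \le 2R(T)$.

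Next I construct $E_2$ using $f_2$ together with the following corruption rule: at round $t$, flip $\sigma_t$ from $0$ to $1$ exactly when $q_t \in (1/2, 1)$, and leave the signal alone otherwise. A case check shows the signals observed in $E_2$ are pointwise identical to those $A$ would see in $E_1$: for $q_t < 1/2$ both environments return $0$; for $q_t \in (1/2, 1)$ the corruption overrides $E_2$'s native $0$ with $1$, matching $E_1$; for $q_t > 1$ both return $1$. By a coupling argument, $A$ therefore issues the exact same sequence of queries in $E_2$ as in $E_1$. The number of corruptions used equals $\#\{t : q_t \in (1/2, 1)\} \le N$, so at most $2R(T)$ corruptions are used in expectation. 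Finally, in $E_2$ every round with $q_t \le 1/2$ incurs pricing loss $1 - q_t \ge 1/2$, giving cumulative loss at least $(T - N)/2$; in expectation this is at least $(T - 2R(T))/2 = \Omega(T)$ since $R(T) = o(T)$.

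The main obstacle I anticipate is reconciling the stated deterministic corruption budget $C = 2R(T)$ with the fact that $N$ and the $E_2$-loss are naturally controlled only in expectation. I would resolve this either by Yao's minimax principle, reducing to deterministic algorithms (for which $N$ becomes a deterministic number bounded by $2R(T)$ exactly), or by Markov's inequality, which guarantees that with constant probability the realized corruption count lies within a constant multiple of $R(T)$ while the loss in $E_2$ remains $\Omega(T)$. Either route absorbs the issue into harmless constants and recovers the stated asymptotic bound.
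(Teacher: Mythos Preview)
Your argument is correct and follows essentially the same two-environment construction as the paper: the clean instance $f\equiv 1/2$ versus the corrupted instance $f\equiv 1$ with signals flipped whenever the query exceeds $1/2$, combined with the observation that the $R(T)$ bound in the clean instance caps the number of high queries at $2R(T)$. Your treatment is in fact more careful than the paper's on the randomized-algorithm/expected-budget issue, which the paper glosses over by implicitly treating the high-query count as a deterministic quantity.
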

\begin{proof}
Let us consider two environments. In the first environment, the adversary chooses $f(x)\equiv 0.5$ and never corrupts the signal. In the second environment, the adversary chooses $f(x)\equiv 1$ and corrupts the signal whenever the query is above 0.5. 

Now, since the algorithm achieves regret $R(T)$ when $C=0$, then when facing the first environment, the seller can only query values above $0.5$ for at most $2R(T)$ times. Then, by choosing $C = 2R(T)$, the adversary can make the two environments indistinguishable from the seller. Hence the seller necessarily incurs $\Omega(T)$ regret in the second environment. 
\end{proof}
\begin{remark}
The above lower bound shows one cannot hope to achieve a regret bound of the form $O(T^{(d/(d+1))}) + C\cdot o(T^{1/(d+1)})$ when $C$ is unknown. First note any algorithm must achieve $\Omega(T^{d/(d+1)})$ in the worst case when $C=0$ by the lower bound in \cite{mao2018contextual}. Then, suppose the algorithm could achieve $O(T^{d/(d+1)})$ when $C=0$, the algorithm must achieve $\Omega(T)$ regret for some $C = \Theta(T^{d/(d+1)})$. 
\end{remark}


Next, a lower bound is given when the corruption budget $C$ is {known}. The lower bound will be against a randomized adversary, who draws an environment from some probability distribution $\cD$, and the corruption budget is defined as the expected value of $C$ under distribution $\cD$. Note that the upper bounds (\Cref{thm:pricing}, Corollary \ref{cor:pricingKnownC}) also hold for randomized adversaries. 
\begin{theorem}
There exists some $\cD$ where any algorithm incurs expected regret $\Omega(\sqrt{CT})$, even with complete knowledge of $\cD$. 
\end{theorem}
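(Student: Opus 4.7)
The plan is to use Le Cam's two-point method with a randomized adversary that leverages corruption to make two nearby pricing instances statistically indistinguishable. Concretely, for a gap parameter chosen as $\delta = \sqrt{C/T}$, I would let $\cD$ be the uniform mixture of two environments: $I_1$ with $f \equiv v_1 := 1/2$, and $I_2$ with $f \equiv v_2 := 1/2 + \delta$. In both environments the adversary independently flips every binary signal whose query falls in the distinguishing region $(v_1, v_2]$ with probability $1/2$, capped at $2C$ total flips per realization. Taking $C_1 = C_2 = 2C$ then ensures $\Ex_{\cD}[C] \le C$ automatically.

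The first step is a standard indistinguishability argument. Inside the region, the corrupted signal is $\mathrm{Bern}(1/2)$ under both instances; outside the region, signals are identical and deterministic (always $0$ for $q_t \le v_1$, always $1$ for $q_t > v_2$). By a coupling of the full signal histories, the learner's adaptive query distribution $\pi_t$ at each round $t$ is identically distributed under $I_1$ and $I_2$, regardless of how the learner adapts to past signals.

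The second step is a pointwise inequality on the pricing loss $\ell(q, v) = v - q \cdot \ind{q \le v}$: a case analysis on whether $q \le v_1$, $q \in (v_1, v_2]$, or $q > v_2$ yields
\[
\ell(q, v_1) + \ell(q, v_2) \ge \delta \qquad \text{for every } q \in [0, 1],
\]
with the minimum attained at $q = v_1$ (where the sum equals $0 + \delta$). Summing over the $T$ rounds and using the coupling from step one gives $R_1(\pi) + R_2(\pi) \ge T\delta$, hence $\Ex_\cD[R] = (R_1(\pi) + R_2(\pi))/2 \ge T\delta/2 = \Omega(\sqrt{CT})$.

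The main obstacle is enforcing the expected-budget constraint $\Ex_\cD[C] \le C$ against an adaptive learner, since a learner that queries the distinguishing region many times could in principle push the expected number of corruptions above $C$. The resolution is to hard-cap the adversary's corruption at $2C$ per realization; once the cap binds, the remaining in-region signals become informative but one must control the residual total-variation slack so that the $T\delta/2$ lower bound survives. A secondary subtlety is that the asymmetric pricing loss in $I_1$ (where overpricing loses a full $v_1 = 1/2$) makes in-region exploration expensive, so the learner cannot beat the bound by explicitly paying for information; this asymmetry is what distinguishes the pricing-loss argument from its symmetric counterpart and is essential for pinning down the $\sqrt{CT}$ scaling at the chosen $\delta$.
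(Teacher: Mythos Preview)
Your Le Cam construction with gap $\delta=\sqrt{C/T}$ and random bit-flips in $(v_1,v_2]$ has a real gap that the hard cap does not repair. Because the learner has complete knowledge of $\cD$ (including the cap $2C$), she can deliberately exhaust the adversary's budget: query any fixed point of $(v_1,v_2]$ for, say, $8C$ rounds. With high probability the cap binds well before round $8C$, after which the in-region signal is deterministic---$\sigma=1$ under $I_1$ and $\sigma=0$ under $I_2$---so the last few signals identify the environment. The cost of this probing phase is only $O(C)$ in expectation: each in-region query loses $v_1=1/2$ under $I_1$ and at most $\delta$ under $I_2$. Thereafter she prices at the correct $v_i$ and pays zero. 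Your mixture therefore admits an algorithm with expected regret $O(C)$, which is $o(\sqrt{CT})$ whenever $C=o(T)$, contradicting the claimed bound. The ``residual total-variation slack'' you allude to is not small: once the cap binds the two signal laws sit at total-variation distance $1$, and reaching that point costs the learner $\Theta(C)$, not $\Theta(\sqrt{CT})$. (Also, $C_1=C_2=2C$ gives $\Ex_\cD[C]\le 2C$, not $\le C$; that is only a constant, but it is another sign the budget accounting has not been worked through.)

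The paper's argument sidesteps this by using a \emph{constant} gap ($f\equiv 1/2$ versus $f\equiv 1$) together with an \emph{asymmetric} mixture: environment~2 is drawn with probability $p=\sqrt{C/T}$ and carries a deterministic corruption budget $C_0=\sqrt{CT}$ (so $\Ex_{\cD}[C]=pC_0=C$). To break the indistinguishability the learner must place more than $C_0$ queries above $1/2$, and each such query costs $1/2$ under environment~1, which occurs with probability $1-p\approx 1$; this alone yields expected regret $\Omega(C_0)=\Omega(\sqrt{CT})$. If she places at most $C_0$ such queries she cannot separate the environments and suffers $\Omega(T)$ under environment~2 with probability $p$, again $\Omega(\sqrt{CT})$ in expectation. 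The structural difference from your plan is that the per-realization corruption budget is $\Theta(\sqrt{CT})$, with the small probability $p$ pulling the \emph{expected} budget down to $C$; in your construction the per-realization budget is $\Theta(C)$, which the learner can afford to burn through.
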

\begin{proof}
The adversary uses two environments. In the first environment, the function $f(x) \equiv 0.5$ and the corruption budget is 0. In the second environment, the function $f(x) \equiv 1$, the corruption budget is $C_0$, and the adversary corrupts any query above 0.5 until the budget is depleted. The adversary chooses the first environment with probability $1-p$ and the second environment with probability $p$. 

If the learner's algorithm queried more than $C_0$ rounds above $0.5$, then the learner incurs regret $0.5C_0$ in the first environment, giving expected regret $\Omega((1-p) C_0)$. If the algorithm did not query more than $C_0$ rounds above $0.5$, then the algorithm incurs regret $0.5(T - C_0)$ in the second environment, giving expected regret $\Omega(p(T-C_0))$. Choosing $p = \sqrt{C/T}, C_0 = \sqrt{CT}$ completes the proof. 
\end{proof}





	


\section{Conclusion and Future Work}
In this work, I design corruption-robust algorithms for the Lipschitz contextual search problem. I present the \emph{agnostic checking} technique and demonstrate its effectiveness in designing corruption-robust algorithms. An open problem is closing the gap between upper bounds and lower bounds, in particular for the absolute loss when $d = 1$. Specifically, can one actually achieve $O(C + \log T)$ regret in this setting? Another interesting future direction is to relax the Lipschitzness assumption. For example, this work assumes the learner knows the Lipschitz constant $L$. Can the learner design efficient no-regret algorithms without knowledge of $L$? 

\acks{The author would like to thank Jialu Li for her help in preparation of this manuscript. The anonymous reviewers at ALT gave helpful suggestions on improving the presentation of this paper. }

\bibliography{references}

\appendix


\section{Proof Details for Absolute Loss}
\label[appendix]{app:proof1d}

\subsection{Absolute Loss with $d = 1$}



An interval has depth $r$ if it was bisected from an interval at depth $r-1$. Initial root intervals have depth 0. Hence, an interval $I_j$ at depth $r$ has length $\len(I_j) = O(2^{-r})$. 

The below two lemmas discuss properties of $\algmq$, and are adapted from \cite{mao2018contextual}. The below lemma shows that each call to $\algmq$ preserves the property $f(I_j)\subset Y_j$, assuming the signal is uncorrupted. 
\begin{lemma}
\label{lemma:updateValid}
    Consider a call to the $\algmq(I_j, Y_j)$ procedure. Let $Y_j$ be the associated range before the query, and let $Y'_j = \algmq(I_j, Y_j)$ be the range after the query. If the signal was uncorrupted and $f(I_j) \subset Y_j$, then $f(I_j) \subset Y'_j$. 
\end{lemma}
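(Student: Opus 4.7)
}
The plan is to unpack the semantics of the binary signal $\sigma_t$, transport the information it carries about $f(x_t)$ to all of $I_j$ using the Lipschitz property, and then observe that intersecting with the prior range $Y_j$ preserves the invariant $f(I_j) \subset Y_j$. First I would fix the query point $q_t = \tfrac{1}{2}(\min(Y_j) + \max(Y_j))$ from $\algmq$ and recall that, since the round is uncorrupted, $\sigma_t = 1$ is equivalent to $f(x_t) < q_t$ and $\sigma_t = 0$ is equivalent to $f(x_t) \ge q_t$. The next step is the Lipschitz transport: for any $x' \in I_j$, since $x_t \in I_j$ and $f$ is $L$-Lipschitz with respect to $\|\cdot\|_\infty$,
\[
|f(x') - f(x_t)| \;\le\; L\,\|x' - x_t\|_\infty \;\le\; L\cdot\len(I_j).
\]

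Given this, I would case-split on $\sigma_t$. In the case $\sigma_t = 1$, combining $f(x_t) < q_t$ with the Lipschitz bound yields $f(x') \le f(x_t) + L\cdot\len(I_j) < q_t + L\cdot\len(I_j)$ for every $x' \in I_j$; in other words $f(I_j) \subset [0,\,q_t + L\cdot\len(I_j)]$. In the case $\sigma_t = 0$, the same bound gives $f(x') \ge f(x_t) - L\cdot\len(I_j) \ge q_t - L\cdot\len(I_j)$, so $f(I_j) \subset [q_t - L\cdot\len(I_j),\,L]$. Either way, the signal alone forces $f(I_j)$ into exactly the half-space used to update $Y_j$ in $\algmq$.

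Finally, I would apply the inductive hypothesis $f(I_j) \subset Y_j$: intersecting the newly derived containment with the prior range gives
\[
f(I_j) \;\subset\; Y_j \cap [0,\,q_t + L\cdot\len(I_j)] \;=\; Y'_j
\]
in the first case, and symmetrically in the second case, which is exactly the definition of $Y'_j$ returned by $\algmq$.

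I do not expect any real obstacle here; the argument is essentially bookkeeping, and the only subtle point is keeping the direction of the inequality aligned with the convention that $\sigma_t = 1$ means "guess was too high". A minor note is that the range in $\algmq$ is written $[\,\cdot,\,1]$ in the $\sigma_t = 0$ branch, which under the paper's convention $f:\cR^d \to [0,L]$ should be read as $[\,\cdot,\,L]$; this cosmetic discrepancy does not affect the argument since we intersect with $Y_j \subset [0,L]$.
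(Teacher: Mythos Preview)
Your proposal is correct and follows essentially the same approach as the paper: use the uncorrupted signal to pin down $f(x_t)$ relative to $q_t$, transport this via the $L$-Lipschitz bound to all of $I_j$, and intersect with the prior $Y_j$. The paper only writes out the $\sigma_t = 0$ case and says the other is similar, whereas you spell out both, but otherwise the arguments are identical.
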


\begin{proof}
Let $q_t$ be the query point (i.e. midpoint of $Y_j$). Assume the signal $\sigma_t = 0$, so that $f(x_t) \ge q_t$. The case where $\sigma_t = 1$ will be similar. 

Since $f$ is $L$-Lipschitz, it must be that 
\[
f(I_j) \ge f(x_t) - L\cdot \len(I_j) \ge q_t - L\cdot \len(I_j), 
\]
hence the update 
\[
Y'_j = Y_j\cap [q_t - L\cdot \len(I_j) , L]
\]
still guarantees that $f(I_j) \subset Y'_j$. 
\end{proof}

The next lemma shows the length of $Y_j$ shrinks by a constant factor after each call to $\algmq$, regardless of whether the signal was corrupted or not. 
\begin{lemma}
\label{lemma:rangeShrinks}
Consider a call to the $\algmq(I_j, Y_j)$ procedure. Let $Y_j$ be the associated range before the query, and let $Y'_j = \algmq(I_j, Y_j)$ be the range after the query. Suppose $\len(Y_j) \ge 4L\cdot \len(I_j)$, then $\frac{1}{2} \len{(Y_j)} \le \len{(Y'_j)} \le \frac{3}{4} \len{(Y_j)}$ regardless of whether the signal is corrupted or not. 
\end{lemma}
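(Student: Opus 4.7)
The plan is to compute $\len(Y'_j)$ directly from the two cases in $\algmq$ and to observe that, under the hypothesis $\len(Y_j) \ge 4L\cdot\len(I_j)$, neither of the intersection clamps in $\algmq$ is active. This reduces both cases to the same length formula, and the two claimed bounds fall out of a single arithmetic identity. Crucially, the identity is symmetric in $\sigma_t$, so corruption of the signal plays no role in this lemma.

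Concretely, I would parametrize $Y_j = [a,b]$, write $\ell := \len(Y_j) = b-a$, and let $q$ denote the queried midpoint, so that $q-a = b-q = \ell/2$. The hypothesis $\ell \ge 4L\cdot\len(I_j)$ gives $L\cdot\len(I_j) \le \ell/4 < \ell/2$, and hence $q + L\cdot\len(I_j) < b$ and $q - L\cdot\len(I_j) > a$. Consequently the clamps to $[0,\,\cdot]$ and $[\cdot,\,1]$ in $\algmq$ do not bind, and the intersections with $Y_j$ simplify to $Y'_j = [a,\, q + L\cdot\len(I_j)]$ when $\sigma_t = 1$ and $Y'_j = [q - L\cdot\len(I_j),\, b]$ when $\sigma_t = 0$. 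In either case one reads off $\len(Y'_j) = \ell/2 + L\cdot\len(I_j)$, regardless of whether $\sigma_t$ was the honest signal or its adversarial flip.

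The lower bound $\len(Y'_j) \ge \ell/2$ is then immediate, while the upper bound $\len(Y'_j) \le \tfrac{3}{4}\ell$ rearranges to $L\cdot\len(I_j) \le \ell/4$, which is precisely the hypothesis. I do not anticipate a genuine obstacle here; the only thing to be careful about is the bookkeeping that verifies both clamps are slack, since the hypothesis $4L\cdot\len(I_j) \le \len(Y_j)$ is calibrated exactly to make this work. The robustness to corruption is, in a sense, automatic: the length computation depends only on the width of the side of $Y_j$ kept after the query (half of $Y_j$ plus the Lipschitz slack $L\cdot\len(I_j)$), and both halves have the same width.
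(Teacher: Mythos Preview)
Your proposal is correct and follows essentially the same approach as the paper: verify that the hypothesis $\len(Y_j)\ge 4L\cdot\len(I_j)$ makes the intersection with $Y_j$ trivial (so $Y'_j$ is exactly half of $Y_j$ extended by the Lipschitz slack $L\cdot\len(I_j)$), then read off the length. The paper argues one signal case and defers the other by symmetry, whereas you note directly that both cases yield $\len(Y'_j)=\ell/2+L\cdot\len(I_j)$; this is a minor presentational difference, not a different method.
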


\begin{proof}
    Let $q_t$ be the query point (i.e. midpoint of $Y_j$). Assume the signal $\sigma_t = 0$, the case with $\sigma_t = 1$ will be similar. 
    
    Before the call to $\algmq(I_j, Y_j)$, the learner has $\len(Y_j) \ge 4L\cdot \len(I_j)$. This implies 
    \[
    q_t - \min(Y_j) = \frac{\len(Y_j)}{2} \ge 2L\cdot \len(I_j)
    \]
    so that
    \[
    q_t - L\cdot \len(I_j) > \min(Y_j). 
    \]

    The update can be written as
    \begin{align*}
        Y'_j &= [q_t - L\cdot \len(I_j), L] \cap Y_j\\
        &= [q_t - L\cdot \len(I_j), q_t] \cup [q_t, \max(Y_j)]
    \end{align*}
    The lemma then follows from $\len{( [q_t, \max(Y_j)] )} = \frac{1}{2} \len{ ( Y_j ) }$, $\len{ ( [q_t - L\cdot \len(I_j), q_t] ) } = L\cdot \len(I_j) \le \frac{1}{4} \len{ ( Y_j ) }$. 
\end{proof}

I now give several lemmas that discuss various properties of the three types of intervals (\emph{safe}, \emph{correcting}, or \emph{corrupted}). 
\begin{lemma}
\label{lemma:bisectFast}
    If an interval is marked dubious, then the interval is bisected after $O(\log T)$ queries. If an interval is not marked dubious, then the interval is bisected after $O(1)$ queries. 
\end{lemma}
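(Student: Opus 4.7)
The plan is to combine the constant-factor shrinkage guarantee of Lemma~\ref{lemma:rangeShrinks} with a careful accounting of $\len(Y_j)$ entering the shrinking phase. The algorithm first issues two agnostic check queries at the endpoints of $S_j$, after which $Y_j$ is reassigned (line~\ref{algoline:marked_rangeset} or line~\ref{algoline:unmarked_rangeset}), and only then does it start calling $\algmq$. Since $\algmq$ is invoked only while $\len(Y_j) \ge \max(4L\cdot\len(I_j),\,4L/T) \ge 4L\cdot\len(I_j)$ (otherwise the algorithm would bisect instead), the precondition of Lemma~\ref{lemma:rangeShrinks} is automatically met at every such call, and each call shrinks $\len(Y_j)$ by a factor of at least $3/4$ regardless of corruption. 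So, up to the $O(1)$ agnostic checks, it suffices to count how many times we must multiply $\len(Y_j)$ by $3/4$ before dropping below the bisection threshold.

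For the non-dubious case, when $I_j$ was produced by bisecting its parent $I_{\mathrm{par}}$, the checking interval $S_j$ was set equal to the parent's $Y$ at the instant of bisection, so $\len(S_j) < \max(4L\cdot\len(I_{\mathrm{par}}),\,4L/T) = \max(8L\cdot\len(I_j),\,4L/T)$ (and for a root interval $\len(S_j)=L$, which is still $O(L\cdot\len(I_j))$ because $\len(I_j)=1/8$). Applying line~\ref{algoline:unmarked_rangeset} then gives $\len(Y_j) \le \len(S_j) + 2L\cdot\len(I_j) = O\!\left(L\cdot\len(I_j) + L/T\right)$, which is within a constant factor of the bisection threshold $\max(4L\cdot\len(I_j),\,4L/T)$. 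Hence $O(1)$ calls to $\algmq$ drive $\len(Y_j)$ below the threshold, yielding the claimed $O(1)$ query bound.

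For the dubious case, line~\ref{algoline:marked_rangeset} resets $Y_j$ to $[0,L]$, so the shrinking phase starts from $\len(Y_j) = L$. Since $L\cdot(3/4)^k$ first falls below $4L/T$ at $k = \Theta(\log T)$, and since the threshold $\max(4L\cdot\len(I_j),\,4L/T)$ is at least $4L/T$, at most $O(\log T)$ calls to $\algmq$ are needed; together with the two agnostic checks this gives the $O(\log T)$ bound. The one point I would be careful about is confirming that the precondition of Lemma~\ref{lemma:rangeShrinks} is satisfied at every $\algmq$ call we count, which is immediate from the explicit bisection condition in the algorithm but worth stating so that the per-call factor-$3/4$ shrinkage can be chained cleanly into the final geometric bound.
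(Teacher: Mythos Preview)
Your proposal is correct and follows essentially the same approach as the paper's proof: both use Lemma~\ref{lemma:rangeShrinks} to get geometric shrinkage, bound $\len(S_j)$ via the parent's bisection condition to show the non-dubious starting range is within a constant factor of the threshold, and count $O(\log T)$ shrinkage steps from $L$ in the dubious case. You are slightly more explicit than the paper in verifying the precondition of Lemma~\ref{lemma:rangeShrinks} at each call and in handling the root-interval case, but the argument is the same.
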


\begin{proof}
    If the interval is marked dubious, then the range of the interval is reset to $[0,L]$. By Lemma~\ref{lemma:rangeShrinks}, the range $Y_j$ shrinks by at least a factor of $\frac{3}{4}$ each query, hence after $O(\log T)$ queries, the learner has $\len(Y_j) < \max(4L\cdot \len(I_j), 4 L / T) $. 

    If the interval were not marked dubious, then the range $Y_j$ is updated as
    \[
    Y_j = [\min(S_j) - L\cdot \len(I_j), \max(S_j) + L\cdot \len(I_j)] \cap [0,L]. 
    \]
    From the above, it can be seen that
    \[
    \len(Y_j) \le \len(S_j) + 2L\cdot \len(I_j). 
    \]
    By the splitting condition,
    \[
    \len(S_j) \le \max(8L\cdot \len(I_j), 4L / T). 
    \]
    If the interval $I_j$ has length larger than $1/(2T)$, then $\len(S_j) \le 8 L \cdot \len(I_j) $, and after the update the learner has $\len(Y_j) \le 10 L \cdot \len(I_j)$. If the interval $I_j$ has length no larger than $1/(2T)$, then $\len(S_j) \le 4 L / T$,  and after the update $\len(Y_j) \le 5 L / T$. In either case, Invoking Lemma~\ref{lemma:rangeShrinks} again, the range shrinks by constant factor each query and it takes $O(1)$ rounds for the range to shrink below $\max ( 4L \cdot \len(I_j), 4 L / T)$. 
\end{proof}

\begin{lemma}
\label{lemma:amendingFixes}
    Let $I_j$ be a correcting interval. When $I_j$ is bisected, it holds that $f(I_j) \subset Y_j$. 
\end{lemma}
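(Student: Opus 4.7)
The plan is to do a case split on whether $I_j$ gets marked dubious during its two agnostic-check queries, and in each case to establish $f(I_j)\subset Y_j$ at the moment the check phase ends. From that point on the argument is uniform: because $I_j$ is correcting, every signal received on $I_j$ is uncorrupted by definition, so Lemma \ref{lemma:updateValid} applies at every subsequent $\algmq$ call and preserves the containment all the way up to the moment of bisection.

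The dubious branch is immediate: line \ref{algoline:marked_rangeset} resets $Y_j := [0, L]$, and since $f$ takes values in $[0, L]$ we trivially have $f(I_j) \subset Y_j$ at that instant. The substantive branch is when $I_j$ passes both agnostic checks. Let $t_1, t_2$ be the rounds on which the learner queried $\min(S_j)$ and $\max(S_j)$, with $x_{t_1}, x_{t_2} \in I_j$. Since neither query triggered the dubious flag and both signals are uncorrupted, unpacking the definition of $\sigma(\cdot)$ gives $f(x_{t_1}) \ge \min(S_j)$ and $f(x_{t_2}) \le \max(S_j)$. Applying $L$-Lipschitzness together with $\len(I_j)$ as a diameter bound on $I_j$ then yields, for every $y \in I_j$,
\[
\min(S_j) - L\cdot\len(I_j) \;\le\; f(y) \;\le\; \max(S_j) + L\cdot\len(I_j),
\]
which is exactly the interval assigned to $Y_j$ on line \ref{algoline:unmarked_rangeset} (intersecting with $[0, L]$ does no harm since $f(I_j)\subset[0,L]$ anyway).

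The main obstacle, modest as it is, lies in correctly accounting for the roles of parent and child. Because the parent of $I_j$ is a \emph{corrupted} interval, the inherited $S_j$ (equal to the parent's final associated range) need not contain $f(I_j)$, so the invariant cannot be passively inherited and must be re-established on $I_j$ itself. The agnostic checks are precisely the mechanism that does this, and the observation above shows that two uncorrupted endpoint signals, combined with Lipschitzness, are just enough to sandwich $f(I_j)$ inside the slightly inflated range that the algorithm commits to. From there, Lemma \ref{lemma:updateValid} closes out the induction on the remaining $\algmq$ steps.
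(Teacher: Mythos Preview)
Your proposal is correct and follows essentially the same argument as the paper: a case split on the dubious flag, the trivial containment when $Y_j$ is reset to $[0,L]$, the Lipschitz sandwich $\min(S_j)-L\cdot\len(I_j)\le f(y)\le \max(S_j)+L\cdot\len(I_j)$ from the two uncorrupted endpoint signals in the non-dubious case, and then induction via Lemma~\ref{lemma:updateValid} through the remaining $\algmq$ calls. Your write-up is in fact slightly more explicit than the paper's (naming $t_1,t_2$ and noting why the parent's range cannot be trusted), but the logical structure is identical.
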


\begin{proof}
    It is first shown that $f(I_j) \subset Y_j$ after the update following the agnostic check procedure. There are two cases to consider, whether $I_j$ has been marked dubious or not after the agnostic check. 
    
    If $I_j$ has been marked dubious, then the update resets $Y_j$ to $[0,L]$, hence the learner has $f(I_j) \subset Y_j$ trivally. 
    

    If $I_j$ has not been marked dubious, the update
    \[
    Y_j = [\min(S_j) - L\cdot\len(I_j), \max(S_j) + L\cdot\len(I_j)]
    \]
    takes place. Since $I_j$ is a correcting interval, the signals are uncorrupted, and by Lipschitzness of $f$, the learner has
    \begin{align*}
        f(I_j) \ge \min(S_j) - L\cdot\len(I_j) \\
        f(I_j) \le \max(S_j) + L\cdot\len(I_j). 
    \end{align*}
    Hence the learner has $f(I_j) \subset Y_j$ after the update. 
        
    Putting these two cases together, after the update on the associated range following the agnostic checking steps, the learner has $f(I_j) \subset Y_j$. By inducting on Lemma~\ref{lemma:updateValid}, repeated calls to $\algmq$ guarantees $f(I_j) \subset Y_j$ when $I_j$ is bisected. 
\end{proof}

\begin{lemma}
\label{lemma:safeNotMarked}
    Safe intervals are not marked dubious. 
\end{lemma}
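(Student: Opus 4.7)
The plan is to prove the lemma by strong induction on the depth $r$ of the interval, strengthening the induction hypothesis to include the auxiliary claim that for any safe or correcting interval $I_j$, one has $f(I_j)\subset Y_j$ at the moment $I_j$ is bisected. Given this auxiliary claim, the main statement follows quickly: a safe interval $I_j$ inherits $S_j$ equal to its parent's $Y_j$ at the parent's bisection moment, so $f(I_j)\subset S_j$. Since by definition every signal on a safe $I_j$ is uncorrupted, querying $\min(S_j)$ produces $\sigma_t=0$ and querying $\max(S_j)$ produces $\sigma_t=1$, and neither outcome triggers the dubious marking condition in \Cref{algo:1dabsolute}.

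For the base case ($r=0$, root intervals) $S_j$ is initialized to $[0,L]\supseteq f(I_j)$ trivially, since $f$ maps into $[0,L]$. For the inductive step I would split on the type of the parent interval, which by definition of safety must itself be either safe or correcting. If the parent is correcting, \Cref{lemma:amendingFixes} directly gives $f(I_\text{parent})\subset Y_\text{parent}$ at bisection, hence $f(I_j)\subset S_j$ for the child. If the parent is itself safe, the inductive hypothesis says the parent was not marked dubious, so its post-check update took the unmarked branch of line~\ref{algoline:unmarked_rangeset} and set $Y_\text{parent}=[\min(S_\text{parent})-L\cdot\len(I_\text{parent}),\max(S_\text{parent})+L\cdot\len(I_\text{parent})]\cap[0,L]$. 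Because the parent is safe, the two agnostic check signals are uncorrupted; not being marked dubious then forces $\min(S_\text{parent})\le f(x_{t_1})$ and $\max(S_\text{parent})\ge f(x_{t_2})$ at the relevant contexts $x_{t_1},x_{t_2}\in I_\text{parent}$. The $L$-Lipschitz property of $f$ then yields $f(I_\text{parent})\subset Y_\text{parent}$ right after the update, and \Cref{lemma:updateValid} (applied along the chain of uncorrupted $\algmq$ calls that follow) propagates the containment up to the moment of bisection. This closes the induction.

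The main obstacle I foresee is keeping the chain of reasoning clean in the safe-parent case: one must simultaneously use (i) the inductive hypothesis that the parent was not marked dubious, which sends the algorithm into the unmarked update branch; (ii) the uncorrupted status of the two agnostic check signals on the parent; and (iii) the Lipschitz expansion around the two endpoint witnesses. Once the post-check containment $f(I_\text{parent})\subset Y_\text{parent}$ is established, \Cref{lemma:updateValid} handles the refinement phase mechanically, and the correcting-parent case is essentially free via \Cref{lemma:amendingFixes}. A minor technical point is that the base case requires remembering that $f$ takes values in $[0,L]$ (as fixed in the problem formulation), so the initial checking interval $[0,L]$ really does contain $f(I_j)$.
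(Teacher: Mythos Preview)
Your proposal is correct and follows essentially the same inductive approach as the paper, with the same strengthened hypothesis that $f(I_j)\subset S_j$ (equivalently, $f(I_{\text{parent}})\subset Y_{\text{parent}}$ at the moment of bisection). The only difference is cosmetic: in the safe-parent case you re-derive the post-check containment from the unmarked-branch update together with the uncorrupted endpoint signals and Lipschitzness, whereas the paper simply invokes the auxiliary inductive hypothesis on the parent and then \Cref{lemma:updateValid}; both routes are valid and yours just makes explicit what the paper leaves implicit.
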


\begin{proof}
Induction is used to show the following for any safe interval $I_j$: 
\begin{enumerate}
    \item $f(I_j) \subset S_j$
    \item $f(I_j) \subset Y_j$ when $I_j$ is bisected
\end{enumerate}

Let $I_j$ be a safe interval. If $I_j$ is a root interval, then trivially $f(I_j)\subset S_j$. Further by Lemma~\ref{lemma:updateValid}, $f(I_j) \subset Y_j$ when $I_j$ is split. 

If $I_j$ has a correcting interval as its parent interval, then by Lemma~\ref{lemma:amendingFixes}, $f(I_j) \subset S_j$, and consequently by Lemma~\ref{lemma:updateValid} $f(I_j) \subset Y_j$ when $I_j$ is split. 

If $I_j$ has a safe interval as its parent interval, then a simple induction shows the desired result. 
\end{proof}

The following corollary follows directly from Lemma~\ref{lemma:bisectFast} and Lemma~\ref{lemma:safeNotMarked}. 
\begin{corollary}
\label{cor:safeBisectFast}
Safe intervals bisect in $O(1)$ rounds. 
\end{corollary}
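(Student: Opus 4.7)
The plan is to observe that this corollary is an immediate two-line consequence of the preceding lemmas, so the proposal is simply to chain them together. First I would invoke Lemma~\ref{lemma:safeNotMarked} to conclude that any safe interval $I_j$ is never marked dubious at any point during its lifetime. Then I would invoke the second clause of Lemma~\ref{lemma:bisectFast}, which guarantees that any interval that is not marked dubious gets bisected after only $O(1)$ queries land in it. Composing the two statements yields exactly the claim.

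Since both ingredients are already in hand, there is no real obstacle here; the substantive work lives in the two prior lemmas. The inductive argument through the three interval types (root, having a correcting parent, having a safe parent) was absorbed into Lemma~\ref{lemma:safeNotMarked}, and the constant-factor shrinkage analysis on $\len(Y_j)$ via Lemma~\ref{lemma:rangeShrinks} was absorbed into Lemma~\ref{lemma:bisectFast}. The only bookkeeping point I would flag is a minor translation between "queries" and "rounds": Lemma~\ref{lemma:bisectFast} bounds the number of queries placed on $I_j$ before bisection, and each such query corresponds to exactly one round in which the context $x_t$ lands in $I_j$, so the $O(1)$ bound transfers verbatim to the statement in the corollary.

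Thus the proof reduces to a single sentence: by Lemma~\ref{lemma:safeNotMarked} a safe interval is never marked dubious, so by Lemma~\ref{lemma:bisectFast} it is bisected after $O(1)$ rounds in which a context lands in it, as claimed.
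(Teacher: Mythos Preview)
Your proposal is correct and matches the paper's approach exactly: the paper states that the corollary ``follows directly from Lemma~\ref{lemma:bisectFast} and Lemma~\ref{lemma:safeNotMarked},'' which is precisely the two-step chaining you describe. Your additional remark translating ``queries'' to ``rounds'' is a harmless clarification that the paper leaves implicit.
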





\begin{lemma}
\label{lemma:corruptedLoss}
    Corrupted intervals contribute $L\cdot O(C \log T)$ loss. 
\end{lemma}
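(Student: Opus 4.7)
The plan is to bound the total loss from corrupted intervals by first bounding the number of such intervals, then the number of queries per interval, and finally the loss per query.

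First I would argue that there are at most $C$ corrupted intervals in total. By definition, for an interval $I_j$ to be corrupted there must exist some round $t$ at which $x_t \in I_j$ and the signal $\sigma_t$ is corrupted. Moreover, at the time of any such corruption, the context $x_t$ lies in exactly one active interval of the current partition, so each of the at most $C$ corrupted rounds can witness corruption in at most one interval. Hence the number of distinct corrupted intervals is at most $C$.

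Next, I would invoke \Cref{lemma:bisectFast} to bound the number of queries that can occur on any single corrupted interval before it is bisected. That lemma shows every interval, whether or not it is marked dubious, is bisected after at most $O(\log T)$ queries. Therefore, any particular corrupted interval accumulates at most $O(\log T)$ queries. Combined with the previous step, the total number of queries that take place on corrupted intervals across the entire run of the algorithm is at most $O(C \log T)$.

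Finally, since the queried value $q_t$ lies in $[0,L]$ and the true value $f(x_t)$ lies in $[0,L]$, the absolute loss incurred on any single round is trivially at most $L$. Multiplying this per-round loss bound by the total query count on corrupted intervals yields a cumulative contribution of at most $L \cdot O(C \log T)$, which is the claimed bound. The proof is essentially a straightforward counting argument and I do not anticipate a significant obstacle; the only subtle point worth stating carefully is the bookkeeping that attributes each corruption to a unique interval, which relies on the fact that bisection only occurs after a query and thus the active interval at round $t$ is well-defined.
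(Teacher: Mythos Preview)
Your proposal is correct and follows essentially the same approach as the paper's proof: bound the number of corrupted intervals by $C$, invoke \Cref{lemma:bisectFast} to bound the queries per interval by $O(\log T)$, and use the trivial $O(L)$ per-round loss bound. Your write-up is slightly more detailed in justifying why each corruption is charged to a unique interval, but the argument is identical.
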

\begin{proof}
    There are at most $C$ corrupted intervals, and each interval splits in $O(\log T)$ rounds by Lemma~\ref{lemma:bisectFast}, with each round incurring $O(L)$ regret (trivially). Hence corrupted intervals contribute total $L \cdot O(C\log T)$ regret. 
\end{proof}

\begin{lemma}
\label{lemma:amendingLoss}
    Correcting intervals contribute $L\cdot O(C)$ loss. 
\end{lemma}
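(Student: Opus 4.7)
The plan is to first bound the number of correcting intervals, and then show that each such interval contributes only $O(L)$ cumulative loss during its entire lifetime. Multiplying the two yields the desired $L \cdot O(C)$ bound.

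For the count, observe that every correcting interval, by definition, has a \emph{corrupted} interval as its parent. Since each corruption can affect at most one interval, there are at most $C$ corrupted intervals, and each of them, when bisected, produces exactly two children. Hence the number of correcting intervals is bounded by $2C$.

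Next I bound the per-interval loss on a correcting interval $I_j$. The queries on $I_j$ split into two phases. In the \emph{agnostic check} phase the learner issues two queries at the endpoints of $S_j$. Since $S_j \subset [0,L]$, these two queries trivially contribute at most $2L = O(L)$ in loss, regardless of the state of $S_j$. For the \emph{midpoint query} phase, I would use the fact that no corruption occurs on $I_j$ (by definition of a correcting interval). If $I_j$ is not marked dubious, then Lemma~\ref{lemma:amendingFixes} gives $f(I_j)\subset Y_j$, and by the analysis in Lemma~\ref{lemma:bisectFast}, $\len(Y_j) = O(L\cdot \len(I_j))$ after the update, after which $O(1)$ further midpoint queries suffice to trigger bisection, each with loss bounded by $\len(Y_j)/2 = O(L)$. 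If $I_j$ is marked dubious, then $Y_j$ is reset to $[0,L]$; since all signals on $I_j$ are uncorrupted, Lemma~\ref{lemma:updateValid} inductively ensures $f(I_j)\subset Y_j$ throughout, so each midpoint query incurs loss at most $\len(Y_j)/2$, and by Lemma~\ref{lemma:rangeShrinks} this quantity shrinks by at least a factor $3/4$ per query. Summing the geometric series yields $O(L)$ total loss over the midpoint queries.

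Combining the two phases gives $O(L)$ loss per correcting interval, and multiplying by the $2C$ bound on their number yields $L \cdot O(C)$. The main subtlety — and the step I would treat most carefully — is handling the dubious case: the naive bound would just use Lemma~\ref{lemma:bisectFast}'s $O(\log T)$ query count times $O(L)$ per query, giving an $O(\log T)$ factor. The key observation needed to avoid this factor is that on a correcting interval, the signals are clean, so $f(I_j) \subset Y_j$ is preserved by Lemma~\ref{lemma:updateValid} at every step, allowing the per-query loss (rather than just the query count) to be collapsed via a geometric sum.
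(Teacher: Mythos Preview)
Your proof is correct and follows essentially the same approach as the paper: bound the number of correcting intervals by $2C$, then show each contributes $O(L)$ loss via the geometric shrinkage of $\len(Y_j)$ from Lemma~\ref{lemma:rangeShrinks}. The paper handles the dubious and non-dubious cases uniformly with a single geometric-sum argument starting from $\len(Y_j)\le L$, so your case split is unnecessary, though your explicit invocation of Lemma~\ref{lemma:updateValid} to justify $f(I_j)\subset Y_j$ (and hence per-query loss $\le \len(Y_j)/2$) fills in a step the paper leaves implicit.
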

\begin{proof}
    There are at most $2C$ correcting intervals. For each correcting interval $I_j$, querying the two endpoints of $S_j$ incur $O(L)$ regret. Each call to $\algmq(I_j, Y_j)$ incur $O(\len(Y_j))$ regret. By Lemma~\ref{lemma:rangeShrinks}, $\len(Y_j)$ shrinks by a factor of $\frac{3}{4}$, hence the loss is geometrically decreasing with each query, and the total loss incurred within $I_j$ is $O(L)$. 
\end{proof}

The below lemma bounds the regret from safe intervals, and is adapted from \cite{mao2018contextual}. 
\begin{lemma}
\label{lemma:safeLoss}
Safe intervals contribute $L\cdot O(\log T)$ loss. 
\end{lemma}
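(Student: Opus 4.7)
The plan is to use the invariant $f(I_j)\subset Y_j$ that safe intervals maintain throughout their lifetime (which is implicit in the induction of~\Cref{lemma:safeNotMarked}) together with the $O(1)$ query-per-interval bound from~\Cref{cor:safeBisectFast}, and then stratify the loss accounting by the depth of the safe interval.

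First I would derive a per-interval loss bound. For any safe interval $I_j$, every query $q_t$ placed in $I_j$ satisfies $|q_t-f(x_t)|\le \len(Y_j)$ at the time of the query, because $f(x_t)\in f(I_j)\subset Y_j$ by the same induction used in~\Cref{lemma:safeNotMarked}. The two agnostic-check queries at the endpoints of $S_j$ each incur loss at most $\len(S_j)$, and by the parent's bisection criterion $\len(S_j)\le \max(8L\len(I_j),4L/T)$. After the agnostic check, the update in line~\ref{algoline:unmarked_rangeset} gives $\len(Y_j)\le \len(S_j)+2L\len(I_j)=O(\max(L\len(I_j),L/T))$, and~\Cref{lemma:rangeShrinks} ensures that each subsequent $\algmq$ call shrinks $\len(Y_j)$ by a constant factor. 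Since~\Cref{cor:safeBisectFast} bounds the number of queries on a safe interval by $O(1)$, the total loss charged to $I_j$ is $O(\max(L\len(I_j),L/T))$.

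Next I would sum over depths. A safe interval at depth $r$ has $\len(I_j)=\Theta(2^{-r})$, and the bisection tree contains at most $2^r$ intervals at any such depth. For depths $0\le r\le \log T$, the $L\len(I_j)$ term dominates, so the per-depth total is at most $2^r\cdot O(L\cdot 2^{-r})=O(L)$, summing to $O(L\log T)$. For depths $r>\log T$, each query contributes only $O(L/T)$ loss, and since the total number of queries across all rounds is $T$, the aggregate contribution from these depths is $O(L)$. Including a constant $O(L)$ from the eight root intervals at depth $0$ yields the claimed $L\cdot O(\log T)$ bound.

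The main technical care lies in the role of the $4L/T$ cap inside the bisection criterion: it is what guarantees that deep intervals still bisect in $O(1)$ queries on safe branches and that each such query contributes only $O(L/T)$ loss, so the regimes $r\le \log T$ and $r>\log T$ separate cleanly into a $\log T$-deep geometric sum and a flat aggregate, which together give the final bound.
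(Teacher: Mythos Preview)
Your argument is correct and mirrors the paper's proof: both use the invariant $f(I_j)\subset S_j$ (hence $f(I_j)\subset Y_j$) for safe intervals, the $O(1)$ query bound from \Cref{cor:safeBisectFast}, and the depth-stratified count $O(2^h)$ intervals at depth $h$ each contributing $L\cdot O(2^{-h})$ loss. The only difference is expository: you handle the two regimes $r\le \log T$ and $r>\log T$ explicitly via the $4L/T$ cap, whereas the paper's proof treats only $h<\log T$ and relegates the deep-interval contribution to the remark following \Cref{thm:symm1D}.
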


\begin{proof}
    Consider a safe interval $I_j$ at depth $h < \log T$. The two queries on the endpoints incur $O(2^{-h})$ loss, since 
    \[
    \len(S_j) < 8 L \cdot \len(I_j) = O(2^{-h}). 
    \]
    Each call to $\algmq(I_j, Y_j)$ also incurs $L\cdot O(2^{-h})$ loss since $\len(Y_j) = O(L\cdot \len(I_j)) = L\cdot O(2^{-h})$. By Corollary~\ref{cor:safeBisectFast}, safe intervals splits in $O(1)$ rounds, hence the loss incurred at interval $I_j$ is $L\cdot 
 O(2^{-h})$. Note that there can be at most $O(2^h)$ intervals at depth $h$, therefore a loss of magnitude $L\cdot O(2^{-h})$ can be charged at most $O(2^h)$ times. In a total of $T$ rounds, the loss can be at most $L\cdot O(\log T)$. 
    
    
\end{proof}

\thmsymmoneD*
\begin{proof} Summing over the loss contributed by corrupted intervals (Lemma \ref{lemma:corruptedLoss}), correcting intervals (Lemma \ref{lemma:amendingLoss}), and safe intervals (Lemma \ref{lemma:safeLoss}) completes the proof. 
\end{proof}

\subsection{Absolute Loss with $d > 1$}
\label[appendix]{app:sym-highd}

\begin{algorithm2e}
\caption{Learning with corrupted binary signal under absolute loss for $d > 1$}
\label{algo:highDsymm}
Learner maintains a partition of hypercubes $I_j$ of the input space $[0,1]^d$ throughout learning process\;
For each hypercube $I_j$, learner stores a checking interval $S_j$ and maintains an associated range $Y_j$\;
The partition is initialized as a single hypercube with checking interval $S_j$ set to $[0, 8L]$\;
\For {$t = 1, 2, ..., T$} {
    Learner receives context $x_t$\;
    Learner finds hypercube $I_j$ such that $x_t \in I_j$\;
    Let $Y_j$ be the feasible interval of $I_j$\;
    \If {Exists an endpoint of $S_j$ not queried} {
        Query an unqueried endpoint of $S_j$\;
        \If {Queried $\min(S_j)$ and $\sigma_t = 1$, or queried $\max(S_j)$ and $\sigma_t = 0$} {
        Mark $I_j$ as dubious\;%
        }
        \If {Both endpoints of $S_j$ have been queried}{
            \If {$I_j$ marked dubious}{
                Set range $Y_j := [0, 8L]$\;\label{algoline:highd_marked_rangeset}
            }
            \Else{
                Set range $Y_j = [\min(S_j) - L\cdot \len(I_j)), \max(S_j) + L\cdot \len(I_j))] \cap [0,1]$\; \label{algoline:highd_unmarked_rangeset}
            }
        }
    }
    \Else {
         $Y_j := \algmq(I_j, Y_j)$\;
        \If { $\len(Y_j) < \max (4 L\cdot \len(I_j), 4L / T )$ } { 
            Bisect each side of $I_j$ to form $2^d$ new hypercubes each with length $\len(I_j) / 2$\;
            For each new hypercube $I_{ji}$ ($1\le i \le 2^d$) set $S_{ji} = Y_j$\;
        } 
        
    } 
}
\end{algorithm2e}

The corruption-robust algorithm for learning a Lipschitz function from $\cR^d \rightarrow \cR (d > 1)$ under absolute loss is summarized in~\Cref{algo:highDsymm}. The proposed algorithm initializes the input space as a single hypercube with $S_j$ initialized to $[0, 8L]$ (instead of $8^d$ hypercubes with $S_j$ initialized as $[0,L]$). The two initializations are essentially equivalent but the former slightly simplifies analysis somewhat: there is now only a single root hypercube. 

\thmsymmMD*
The analysis will be similar to that of~\Cref{algo:1dabsolute} and~\Cref{thm:symm1D}. 

\begin{proof}
There are $O(C)$ corrupted intervals, contributing $L\cdot O(C\log T)$ loss. There are $O(C\cdot 2^d)$ correcting intervals, contributing $ L\cdot O( C\cdot 2^d)$ loss. For safe intervals, there are at most $O(2^{hd})$ intervals at depth $h$. Throughout $T$ rounds, a loss with magnitude $L\cdot O(2^{-h})$ can be charged at most $O(2^{hd})$ times, and the total loss of all safe intervals is then at most $L\cdot O(T^{(d-1) / d})$, since the loss coming from safe intervals can be upper bounded by
\begin{align*}
    L\cdot \sum_{h=0}^{\frac{\log T}{d}} O( 2^{(d-1)h} ) = L\cdot O(T^{{(d-1)}/{d}}). 
\end{align*}
Putting the loss of all three types of intervals, the total loss is
\[
L\cdot O(C\log T + C\cdot 2^d + T^{(d-1)/{d}}) = L\cdot O_d(C\log T + T^{(d-1)/d}),
\]
which completes the proof. 
\end{proof}

\section{Proof Details for Pricing Loss}
\label{app:pricing}
\szcomment{Don't use adaptive zooming / adaptive discretization, use uniform discretization}

Recall that on a pricing-ready hypercube $I_j$, the learner is said to become \emph{surprised} when the signal is inconsistent with $Y_j$. In other words, the learner becomes surprised when she queried $\min(Y_j)$ and receives an overprice signal ($\sigma_t = 1$), or queried $\max(Y_j)$ and receives an underprice signal ($\sigma_t = 0$). 

\szdelete{
The sanity check schedule ensures the following holds. 
\begin{claim}
Let a pricing interval be run for $t_0$ rounds before the learner becomes surprised or the algorithm terminates, then the number of sanity checks is $\Theta(g^{-1}(t_0))$. Conversely, if there are $s_0$ sanity checks in this run, then the interval has been run for $\Theta(g(s_0))$ rounds. 
\end{claim}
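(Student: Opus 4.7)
The plan is to unpack the definition of the sanity check schedule and of a \emph{run}, then conclude by a direct counting argument using the monotonicity of $g$.

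First I would fix a hypercube $I_j$ and consider a single run on it. Recall a run begins either when $I_j$ first becomes pricing-ready (i.e.\ when $\len(Y_j)<10L\cdot \eta_0$) or immediately after the learner is surprised and $Y_j$ is reset. By definition of the sanity check schedule, within a given run the $s$-th sanity check (the $s$-th time the learner queries $\max(Y_j)$ instead of $\min(Y_j)$) is performed at round $g(s)$ of that run, where $g:\bbZ_{\ge 1}\to\bbZ_{\ge 1}$ is the chosen strictly increasing schedule. I would also adopt the standing regularity $g(s+1)=\Theta(g(s))$ that is implicit in calling $g$ a ``schedule'' in this context; it holds trivially for the linear schedule $g(s)=s\tau_0$ used by \Cref{algo:pricing}.

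For the forward direction, suppose the run lasts $t_0$ rounds and let $s^\star$ be the total number of sanity checks performed in that run. Since checks occur precisely at rounds $g(1),g(2),\ldots,g(s^\star)$ within the run, maximality of $s^\star$ gives
\[
g(s^\star)\le t_0 < g(s^\star+1).
\]
Monotonicity of $g$ combined with $g(s^\star+1)=\Theta(g(s^\star))$ yields $g(s^\star)=\Theta(t_0)$, and applying $g^{-1}$ (which is well-defined since $g$ is strictly increasing) produces $s^\star=\Theta(g^{-1}(t_0))$, which is the first half of the claim.

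For the converse, suppose $s_0$ sanity checks were performed in the run. Then the last check occurred at round $g(s_0)$ of the run and the next scheduled check would have occurred at round $g(s_0+1)$. Since the run either ended (via a surprise or algorithm termination) strictly before the $(s_0+1)$-st check, or ended exactly on the $s_0$-th check, we obtain
\[
g(s_0)\le t_0 < g(s_0+1)=\Theta(g(s_0)),
\]
so $t_0=\Theta(g(s_0))$, as claimed. The main subtlety, such as it is, is pinning down the right regularity hypothesis on $g$: the $\Theta$ in the statement is meaningful only when $g(s+1)$ and $g(s)$ differ by at most a constant factor. For the schedule actually used by \Cref{algo:pricing} this is immediate, but if the algorithm were later extended to a non-linear schedule (say, geometric growth) this regularity would need to be stated as an explicit hypothesis on $g$.
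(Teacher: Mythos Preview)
The claim you are proving does not actually survive into the final paper: it sits inside an \verb|\szdelete{...}| block and is suppressed at compile time, so there is no proof in the paper to compare against. The published version of \Cref{algo:pricing} abandons the general schedule $g$ and simply performs a check every $\tau_0$ rounds; the analysis (Lemma~\ref{lemma:pricingLoss} and \Cref{thm:pricing}) then uses the trivial count $\lfloor \zeta/\tau_0\rfloor$ directly, for which the claim is immediate.

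Your argument itself is essentially fine, and you correctly isolate the extra hypothesis $g(s+1)=\Theta(g(s))$ that the $\Theta$-statement needs. One small inefficiency: in the forward direction you take a detour through ``$g(s^\star)=\Theta(t_0)$, now apply $g^{-1}$,'' which tacitly requires that $g^{-1}$ also preserve $\Theta$-equivalence. You do not need this. From $g(s^\star)\le t_0<g(s^\star+1)$ and monotonicity alone, interpreting $g^{-1}(t_0)$ as $\max\{s:g(s)\le t_0\}$ gives $s^\star=g^{-1}(t_0)$ exactly, with no regularity assumption at all. The regularity hypothesis is only genuinely needed for the converse direction, where you want $t_0<g(s_0+1)$ to imply $t_0=O(g(s_0))$.
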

}

\begin{lemma}
\label{lemma:surprised}
    The learner becomes surprised at most $C$ times. 
\end{lemma}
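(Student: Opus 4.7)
The plan is to argue that each surprise must consume at least one corruption within the current run of the relevant hypercube. Since runs on the same hypercube are, by construction, disjoint in time (a surprise ends a run and resets $Y_j$) and runs on different hypercubes are trivially disjoint, the corruptions charged to distinct surprises are distinct, which will give the bound $C$.

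Concretely, fix a hypercube $I_j$ and a run on it. I would first track what the algorithm does during this run: it begins with $Y_j = [0,L]$ (either from initialization or from a reset triggered by a previous surprise), then invokes $\algmq$ repeatedly on $I_j$ until $\len(Y_j) < 10L\eta_0$, at which point $I_j$ becomes pricing-ready and the algorithm alternates between pricing queries at $\min(Y_j)$ and scheduled checking queries at $\max(Y_j)$. The central claim I would prove is: if every signal during this run is uncorrupted, then the learner cannot become surprised.

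To establish this, I would argue containment is preserved throughout the run. Since $f(I_j)\subset[0,L]$ trivially at the start, an inductive application of \Cref{lemma:updateValid} to each $\algmq$ call shows that $f(I_j)\subset Y_j$ holds at the end of the searching phase and therefore throughout the pricing-ready phase as well (no other operation modifies $Y_j$ during the run). Now consider any query in the pricing-ready phase. If the learner queries $\min(Y_j)$, then $q_t=\min(Y_j)\le f(x_t)$, so the uncorrupted signal is $\sigma_t=0$, which is \emph{not} a surprise for a pricing query. If the learner queries $\max(Y_j)$, then $q_t=\max(Y_j)\ge f(x_t)$, so the uncorrupted signal is $\sigma_t=1$, which is not a surprise for a checking query. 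Hence no surprise can arise on an entirely uncorrupted run.

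Taking the contrapositive, every surprise implies at least one corrupted round lying in its run (the corruption may be in the searching phase, causing $Y_j$ to drift off $f(I_j)$, or at the surprising query itself). Since runs are disjoint time intervals, the corruptions charged to different surprises are distinct, and the total number of surprises is at most $C$. The main subtlety I expect is the bookkeeping across resets: I would want to be explicit that every reset happens precisely at a surprise, so the \emph{next} run genuinely starts from $Y_j=[0,L]$ and its induction base is clean. Once that is pinned down, the rest is routine.
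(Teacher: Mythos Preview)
Your proposal is correct and follows essentially the same approach as the paper: both argue that a surprise can occur only if at least one corruption lies in the current run (either at the surprising query itself or during the earlier $\algmq$ search that rendered $Y_j$ inaccurate), and since runs are disjoint this charges each surprise to a distinct corruption. Your write-up is more explicit---you invoke \Cref{lemma:updateValid} inductively and carry out the case analysis for pricing versus checking queries---whereas the paper compresses this into a couple of sentences, but the underlying argument is identical.
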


\begin{proof}
    The learner can become surprised for the following two reasons: the signal itself is corrupted, or the signal is uncorrupted, but the associated range $Y_j$ is inaccurate ($f(I_j)\not\subset Y_j$). In the latter case, a corruption must have occurred in the search queries before the hypercube became pricing ready. Since there are at most $C$ corruptions, the learner can become surprised at most $C$ times. 
\end{proof}

\begin{lemma}
\label{lemma:pricingLoss}
    Consider a run of length $\zeta$ on some hypercube $I_j$. Further, assume a total of $\xi$ signals were corrupted during this period. Then pricing rounds pick up a loss of $L\cdot O(\zeta \eta_0 + \xi \tau_0)$ during this run. 
\end{lemma}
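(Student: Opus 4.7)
First I would observe that within any single run on $I_j$, once the hypercube becomes pricing-ready the associated range $Y_j = [q_{\min}, q_{\max}]$ stays fixed until the end of the run: the pricing-ready branch of \Cref{algo:pricing} never updates $Y_j$ except upon a surprise, and a surprise terminates the run. Hence all pricing and checking rounds in the run can be analyzed against a single $Y_j$ with $\len(Y_j) \le 10L\eta_0$. Setting $f_* = \min_{x\in I_j} f(x)$ and $f^* = \max_{x\in I_j} f(x)$, Lipschitzness yields $f^* - f_* \le L\eta_0$. Let $\zeta_p$ and $\zeta_c$ denote the numbers of pricing- and checking-rounds in the run.

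The plan is then to partition each pricing round into three mutually exclusive, exhaustive types and bound each separately.
\paragraph{Type I ($q_{\max} \ge f_*$ and $q_{\min} \le f(x_t)$).} The loss is $f(x_t) - q_{\min} \le f^* - q_{\min}$, and from $q_{\min} = q_{\max} - \len(Y_j) \ge f_* - 10L\eta_0$, this is at most $(f^* - f_*) + 10L\eta_0 = O(L\eta_0)$. Summing over at most $\zeta$ such rounds contributes $L\cdot O(\zeta\eta_0)$.
\paragraph{Type II ($q_{\min} > f(x_t)$).} The uncorrupted signal would be $\sigma_t = 1$, surprising the learner on a pricing round and ending the run. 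So every Type II round except possibly the one that ends the run must be corrupted, giving at most $\xi + 1$ such rounds, each with loss at most $L$.
\paragraph{Type III ($q_{\max} < f_*$ and $q_{\min} \le f(x_t)$).} Here $Y_j$ lies entirely below $f(I_j)$, so every checking round has $q_{\max} < f(x_t)$ and would surprise if uncorrupted; thus $\zeta_c \le \xi + 1$. By the scheduling rule, the number of pricing rounds is bounded by $\zeta_p \le (\zeta_c + 1)\tau_0 \le (\xi + 2)\tau_0$, each with loss at most $L$.

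The main obstacle I foresee is absorbing the $O(1)$ slack in Types II and III into the target $O(\xi\tau_0)$ term. For this I plan to invoke \Cref{lemma:updateValid}: if no corruption occurred anywhere in the run (that is, $\xi = 0$), then by induction on the $\algmq$ updates the invariant $f(I_j) \subset Y_j$ holds when $I_j$ first becomes pricing-ready, forcing $q_{\min} \le f_*$ and $q_{\max} \ge f^*$ and ruling out Types II and III entirely. Therefore whenever Type II or III contributes nonzero loss we have $\xi \ge 1$, and the naive counts $\xi+1$ and $(\xi+2)\tau_0$ collapse to $O(\xi)$ and $O(\xi\tau_0)$ respectively. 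Summing the three contributions yields the claimed $L\cdot O(\zeta\eta_0 + \xi\tau_0)$.
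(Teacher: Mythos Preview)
Your proof is correct and rests on the same three observations as the paper's---that once pricing-ready $Y_j$ is frozen until a surprise, that an ``accurate enough'' $Y_j$ yields $O(L\eta_0)$ loss per pricing round, and that otherwise some surprise must be suppressed by a corruption each cycle---but you organize them differently. The paper fixes the static relation between $f(I_j)$ and $Y_j$ and enumerates five run-level cases ($f(I_j)\subset Y_j$; $f(I_j)$ entirely above $Y_j$; overlap with $\max(Y_j)\in f(I_j)$; $f(I_j)$ entirely below $Y_j$; overlap with $\min(Y_j)\in f(I_j)$). Your decomposition is instead per-round, letting Types I and II alternate within a single run; this absorbs the paper's awkward fourth case (where it has to separately tally uncorrupted and corrupted pricing rounds) into the Type I/II split automatically. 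Your Type III coincides with the paper's ``$f(I_j)$ entirely above $Y_j$'' case. One point you make explicit that the paper leaves implicit: the $+1$ and $+2$ slack terms are harmless precisely because whenever Types II or III appear at all, \Cref{lemma:updateValid} forces $\xi\ge 1$; the paper's case analysis silently relies on the same fact to make its $O(\xi\tau_0)$ bounds meaningful.
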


\begin{proof}
First note that once the hypercube becomes pricing-ready, the associated range $Y_j$ does not change until this run terminates. If $f(I_j) \subset Y_j$, then each pricing round incurs loss $L\cdot O(\eta_0)$, hence a run with length $\zeta$ incurs loss $L\cdot O(\zeta \eta_0)$. 

Otherwise, there are four cases. 
\begin{enumerate}
    \item $f(I_j) \ge \max(Y_j)$
    \item $f(I_j)$ has some overlap with $Y_j$ and $f(I_j) \ge \min(Y_j)$, $\max(Y_j) \in f(I_j)$
    \item $f(I_j) \le \min(Y_j)$
    \item $f(I_j)$ has some overlap with $Y_j$ and $f(I_j) \le \max(Y_j)$, $\min(Y_j) \in f(I_j)$
\end{enumerate} 
\szcomment{Insert a illustration here. }

For case 1, any uncorrupted signal in agnostic check queries makes the learner surprised. Hence assuming the adversary spent corruption budge $\xi$, then the run must terminate in $O( \xi \tau_0 )$ rounds since the adversary must be corrupting every agnostic check query. The learner then trivially incurs loss $L\cdot O(\xi \tau_0)$. 

For case 2, the loss collected is at most $L\cdot O(\eta_0)$ per pricing round, hence the total loss is $L\cdot O(\zeta \eta_0)$. 

For case 3, any uncorrupted signal in pricing rounds makes the learner surprised. Hence assuming the adversary spent a corruption budget $\xi$, the run terminates in $O(\xi)$ rounds (since the adversary must be corrupting all pricing rounds), and the learner incurs regret $L\cdot O(\xi)$. 

For case 4, the adversary does not corrupt agnostic check rounds, or the learner will become immediately surprised. If the learner does not become surprised during pricing rounds, this could be due to either: 1. the learner did not overprice, or 2. the learner overpriced but the adversary corrupted the signal. Consequently in pricing rounds, uncorrupted queries accumulate $L\cdot O(\zeta \eta_0)$ loss, and corrupted queries accumulate $L\cdot O(\xi)$ loss. 

Putting all cases together completes the proof. 
\end{proof}

In the following, let $\cT$ be a multi-set with elements denoting the length of runs of pricing-ready hypercubes. Let $\cC$ be a multi-set with elements denoting the number of corruptions that occurred in runs of pricing-ready hypercubes. 

\thmPricing*
\begin{proof}
    First, an upper bound is obtained on the total loss incurred from searching rounds (i.e., when $\len(Y_j) > 10L\cdot \eta_0$). The learner searches from scratch when starting from initialization ($Y_j$ is set to $[0, L]$ at initialization) or when she becomes surprised ($Y_j$ is reset to $[0, L]$). This can happen at most $C + \eta_0^{-d}$ times, since the learner becomes surprised at most $C$ times (Lemma~\ref{lemma:surprised}) and there are $\eta_0^{-d}$ hypercubes at initialization.  Whenever the learner searches from scratch, it takes $O(\log T)$ queries before the interval becomes pricing-ready. Thus the total loss incurred from search queries is 
    \begin{align}
    L \cdot ( (C + \eta_0^{-d}) \log T). 
    \end{align}

    Next, the loss incurred when range becomes pricing-ready can be separated into two parts, loss from agnostic check queries and loss from pricing rounds. 
    The total loss from agnostic check queries can be bounded by
    \begin{align}
    L\cdot O ( T / \tau_0 ), 
    \end{align}
    since there can be $O(T/\tau_0)$ agnostic check queries, and each query contribute loss at most $L$. 
    
    From Lemma~\ref{lemma:pricingLoss}, the total loss from pricing rounds can be bounded by
    \begin{align}
    L\cdot O\left( \sum_{\xi \in \cC} \xi \cdot \tau_0 + \sum_{\zeta\in \cT} \zeta \cdot \eta_0 \right) &\le L\cdot O( C \tau_0 +  T \eta_0 ). 
    \end{align}

    Putting everything together, the total loss can be bounded by the sum of loss incurred during agnostic checking rounds, pricing rounds, and searching rounds:
    \[
    L\cdot {O}\left( T / \tau_0 + C\tau_0 + T\eta_0 + \eta_0^{-d} \log T + C\log T \right). 
    \]
    Plugging the choice of parameter $\eta_0 = T^{-1/(d+1)}$ finishes the proof. 
\end{proof}




\end{document}